\documentclass[11pt]{article}

\usepackage[preprint]{acl}

\usepackage{times}
\usepackage{latexsym}

\usepackage{algorithm}
\usepackage{algpseudocode}
\usepackage{bm}
\usepackage{amssymb}
\usepackage{amsmath}
\usepackage{booktabs}
\usepackage{tikz}
\usetikzlibrary{positioning}
\usepackage{amsthm}
\newtheorem{proposition}{Proposition}

\usepackage[most]{tcolorbox}
\usepackage{fvextra}

\newtcolorbox{promptbox}[2][]{%
  enhanced,
  colback=white, colframe=black,
  colbacktitle=black, coltitle=white,
  title={#2}, halign title=center,
  fonttitle=\ttfamily\small,
  fontupper=\ttfamily\small,
  halign=left,
  arc=2mm, boxrule=1pt,
  left=6pt, right=6pt, top=4pt, bottom=4pt,
  before upper={\setlength{\parindent}{0pt}},
  #1
}

\DefineVerbatimEnvironment{PromptText}{Verbatim}{%
  breaklines, breaksymbolleft={}, breakautoindent=false,
  breakindent=0pt, fontsize=\small}

\usepackage[T1]{fontenc}

\usepackage[utf8]{inputenc}

\usepackage{microtype}

\usepackage{inconsolata}

\usepackage{graphicx}

\usepackage{subcaption}

\title{Silicon Samples Match the Answers but Not the Respondents}

\author{Yusuke Kumagae \\
  Hakuhodo DY Holdings Inc.\\
  \texttt{yusuke.kumagae@hakuhodo.co.jp} \\\And
  Masahiro Kaneko    \\
  MBZUAI\\
  \texttt{Masahiro.Kaneko@mbzuai.ac.ae} \\}

\begin{document}
\maketitle
\begin{abstract}
Silicon sampling generates survey respondents by having a large language model (LLM) answer a survey as a person with given demographic attributes, and the resulting synthetic populations are increasingly used in place of human ones.
Existing evaluations focus on individual-answer accuracy, per-question answer distributions, and correlations across questions.
None of these compares the respondents themselves, so a generator whose respondents have combinations of answers found in no human respondent can still score well if it reproduces every answer distribution.
We propose the respondent coverage gap (RCG)\footnote{Our code is available from the project page: \url{https://respondent-coverage.github.io/}.}, which
represents each respondent as a point defined by their complete answer profile
and measures how far each human respondent is from the nearest generated one, relative to the distance between human samples.
On two US social surveys, all six LLMs we evaluate leave human respondents uncovered, 
and under our main settings, they fall behind a baseline that samples each answer
independently from the human marginals.
Across the prompting and inference settings we test, every LLM has a higher RCG than a Gaussian copula fitted to the human data.
\end{abstract}
\section{Introduction}\label{sec:intro}

\begin{figure}[t]
  \centering
  \includegraphics[width=\columnwidth]{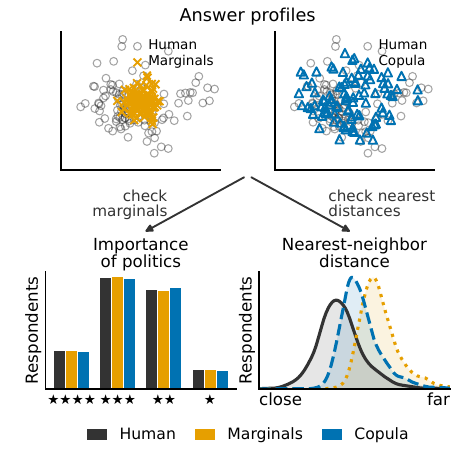}
  \caption{Marginals reproduces each question's answer distribution, and Copula additionally models
  pairwise dependence between questions. Top: answer profiles of human and generated respondents. Bottom: existing evaluation compares answers per question, where the three groups are identical (left); ours compares respondents, where humans are farther from the nearest generated respondent than from the nearest other human (right).}
  \label{fig:teaser}
\end{figure}

Large language models (LLMs) can act as a specified person and answer questions from that person's standpoint~\citep{Zhang2018-rj,Shanahan2023-zq,Wang2024-ki}.
\emph{Silicon sampling}~\citep{Argyle2023-kp} exploits this ability: it gives an LLM a combination of demographic attributes (e.g., female, in her thirties) as a persona, has it produce a full set of answers to a questionnaire (an \emph{answer profile}) as a human respondent would, and repeats this over many personas to obtain a synthetic population.
Such synthetic populations are increasingly used as substitutes for, or supplements to,
human populations in opinion polls, market research, and social science experiments~\citep{Santurkar2023-vx,Park2024-ii,Horton2023-yp,Aher2023-zc}.
Typical uses include estimating the distribution of opinions, probing the reactions of groups that are hard to recruit, and piloting a survey before fielding it.

The quality of such a synthetic population has so far been evaluated through the distribution of answers to each question.
\emph{Individual accuracy}~\citep{Park2024-ii,Hu2026-wb} counts, question by question, whether the answer the LLM generates from a respondent's attributes matches that respondent's own answer.
\emph{Marginal fidelity}~\citep{Santurkar2023-vx,Durmus2024-hc} compares, for each question, the distribution of human answers with the distribution of answers generated from the same attributes.
\emph{Correlational fidelity}~\citep{Moon2024-ql,Williams2026-bs,Libovicky2026-ta,Ma2026-my} computes the correlation between answers to different questions separately on the human side and on the generated side, and compares the two.
All of them aggregate answers along the axis of questions; none of them compares the \emph{respondent distribution}, that is, how respondents are spread out as answer profiles.

Agreement in answer distributions does not guarantee agreement in respondent distributions.
Even when the answer distributions for every question match,
the synthetic population can consist of generated respondents whose answer profiles are absent from the human population;
for a synthetic population to stand in for a human one, the respondent distributions must agree.
We fit two generators to the World Values Survey (WVS)~\citep{wvs7}: \emph{Marginals} reproduces only the per-question answer distributions, and \emph{Copula} additionally models pairwise dependence between questions.
Per question, both are indistinguishable from the human respondents (\autoref{fig:teaser}, bottom left).
Per respondent, measuring the distance from each human respondent to the nearest generated one separates the two (\autoref{fig:teaser}, bottom right; details in Appendix~\ref{sec:appendix:teaser}): Marginals lies far from the human respondents, and Copula, despite modeling pairwise dependence, still falls short of the human reference.
Per-question distributions and pairwise correlations do not fully characterize the respondent distribution.

In this paper, we propose the \emph{respondent coverage gap} (RCG), a metric of how well a synthetic population covers the respondent distribution of a human population.
We treat each answer profile as a point and measure, from each human point, the distance to the nearest generated point;
this directly quantifies how close each human respondent is to a generated respondent.
Because this distance depends on the number of questions and the size of the populations, we normalize it by the same distance measured among the human points themselves.
The reference is another human sample rather than the human respondents themselves: a synthetic population should not copy the human respondents but resemble them as a population.
A ratio of one means the synthetic population is as close to the human respondents as another human sample would be; the more it exceeds one, the farther the least covered human respondents are from any generated respondent.

Evaluating six LLMs (two proprietary models from OpenAI and Google and four open-weight models from 9B to 70B parameters) on two real-world surveys of US residents,
we find that none of them cover the human respondents.
Under our main settings, all fall short of Marginals, which reproduces nothing but
the per-question answer distributions.
Some settings bring individual models close to or below Marginals: using the full response history, increasing reasoning effort, or using higher-tier models such as gpt-5.4 and gemini-3.1-pro-preview.
However, no tested setting yields a lower RCG than Copula,
which additionally models pairwise dependence between questions.
Calibrating the generated answer distributions to match the human ones,
question by question, reduces RCG below that of Marginals, but not below that of Copula.

This failure of LLMs is invisible to existing metrics.
Marginal fidelity does not separate generators that differ in respondent coverage,
and an aggregate correlation metric assigns Marginals, which samples every question independently,
a score on par with the split-half ceiling of the human data.
The failure also takes model-specific forms: some models contract onto typical profiles, others drift off the support of the human population.
These failures surface only when the respondent distribution is examined; by that standard,
synthetic populations from the evaluated LLMs are not yet adequate substitutes for human ones.

\section{Related Work}\label{sec:related_work}
\subsection{Silicon Sampling and Its Evaluation}\label{sec:related_work_silicon_sampling}
Silicon sampling has been applied across domains such as
social simulations~\citep{Park2023-ng,Park2024-ii},
economic experiments~\citep{Horton2023-yp},
social science experiments~\citep{Ashokkumar2026-lf,Aher2023-zc},
and social surveys~\citep{Santurkar2023-vx,Argyle2023-kp,Ma2025-qr}.

\paragraph{Individual accuracy.}
This line of work scores, question by question, whether the generated answer matches
the answer observed from the corresponding human respondent.
\citet{Argyle2023-kp} address the task of predicting vote choice and measure the correspondence with tetrachoric correlations.
\citet{Park2024-ii} ground LLM agents in interviews with Americans and predict
their held-out answers to the General Social Survey and the Big Five Inventory.
Yet the scoring remains per-question. \citet{Park2024-ii} themselves note that
agents can predict a respondent's answers well yet still misrepresent how attitudes covary across the
sample, and defer sample-level correlational structure to future work.

\paragraph{Marginal fidelity.}
In contrast, this line of work compares answer distributions question by question,
without pairing generated and human respondents.
\citet{Santurkar2023-vx} measure the 1-Wasserstein distance between model and human answer distributions over 1,498 Pew survey questions.
\citet{Durmus2024-hc} apply the same per-question comparison across 60+ countries, scoring one minus the Jensen--Shannon distance.
In all of these, the human reference is itself an aggregate distribution; no individual enters the comparison.

\subsection{Beyond Marginal Evaluation}\label{sec:related_work_beyond}
Both lines of work treat each question in isolation.
A few studies~\citep{Moon2024-ql,Williams2026-bs,Libovicky2026-ta,Ma2026-my} take a step further and evaluate dependence across questions.
To do so, they share a common two-step procedure:
(1) computing inter-item associations separately within the human sample and within
the generated sample, and
(2) comparing the two association matrices as wholes.
While these methods do capture dependence across questions, the comparison is made at the level of question pairs, not of the individual respondents that silicon sampling promises to substitute for.
\citet{Williams2026-bs} themselves defer individual-level structure to future work.

\subsection{Comparing Generated and Human Respondents as Distributions}\label{sec:compare_distributions}
Outside silicon sampling, generated samples are compared with human ones as whole distributions,
each sample being one point.
\citet{Sajjadi2018-dv} decompose this comparison into precision,
how closely the generated samples resemble the human ones (fidelity),
and recall, how well they span the variability of the human ones (diversity).
\citet{Kynkaanniemi2019-we} and \citet{Naeem2020-sg} estimate both with $k$-nearest-neighbor balls,
and MAUVE \citep{Pillutla2021-pg} measures both for text as a divergence between the two distributions.
These methods use discrete membership information (whether a sample lies inside a neighborhood ball or which quantization cell it occupies) rather than retaining its distance to the human samples.
Prior work also compares synthetic-to-human nearest-neighbor distances with human-to-human distances.
\citet{Alaa2022-uk} use such a comparison in a binary authenticity test for each generated sample,
whereas \citet{Lautrup2025-xj} use a ratio of median distances in med.DCR to assess privacy risk.
Our focus is instead on human respondents whose nearest generated respondent is too far away.

Silicon sampling has so far been evaluated along the axis of questions
(\autoref{sec:related_work_silicon_sampling} and \autoref{sec:related_work_beyond});
we instead compare distributions at the respondent level, treating each respondent’s
complete answer profile as one sample.
\section{Evaluation Framework}\label{sec:framework}
Let $q_1, \dots, q_M$ be $M$ questions.
A respondent is represented as an answer vector $\bm{u} = (u_1, \dots, u_M)$,
where $u_m$ is the answer chosen for question $q_m$.
We write $\mathcal{H}$ for a set of human answer vectors and
$\mathcal{G}$ for a set of generated answer vectors.

\subsection{Conditional Generation}\label{sec:conditional_generation}
We generate each respondent by conditioning an LLM on a set of demographic
attributes (e.g., sex and age), and compare the generated respondents $\mathcal{G}$
against the human respondents $\mathcal{H}$ who share the same attributes.
The LLM answers the questions one at a time, and the prompt for each question includes a
few of that respondent's earlier answers.
This sequential format, also employed in prior work~\citep{Moon2024-ql,Libovicky2026-ta},\footnote{
\citet{Moon2024-ql} and \citet{Libovicky2026-ta} feed the full history of earlier answers;
we sample only a few of them, since with the full history the total prompt length over a respondent's
$M$ questions grows as $O(M^2)$.
\autoref{sec:ablation} and Appendix~\ref{sec:appendix:full_history} examine the full history.\label{fn:history_size}}
allows later answers to depend on earlier ones within the same respondent.\footnote{
Most prior work instead queries each question independently~\citep{Santurkar2023-vx,Durmus2024-hc,AlKhamissi2024-tj,Dominguez-Olmedo2024-vn,Williams2026-bs,Taday-Morocho2026-yg};
among generated respondents who share the same demographic attributes, this pairs the answers
to different questions at random, so that no answer depends on any other.}

\subsection{Respondent Coverage Gap}\label{sec:respondent_coverage}
Given a distance $d(\bm{u}, \bm{v})$ between answer vectors, let $\delta(\bm{u}; \mathcal{S})$ be the
distance from an answer vector $\bm{u}$ to its nearest neighbor in a set $\mathcal{S}$.\footnote{Excluding $\bm{u}$ itself when $\bm{u} \in \mathcal{S}$.}
Our core idea is to examine the distribution of the nearest-neighbor distance
$\delta(\bm{u}; \mathcal{G})$ over the human respondents $\bm{u} \in \mathcal{H}$ and,
symmetrically, of $\delta(\bm{v}; \mathcal{H})$ over the generated respondents $\bm{v} \in \mathcal{G}$.
The former asks, for each human respondent, how close the nearest generated respondent is;
the latter asks, for each generated respondent, how close the nearest human respondent is.

Taking the maximum of the former recovers the directed Hausdorff distance~\citep{hausdorff1914grundzuge}:
\begin{equation}
d_{\mathcal{H}\rightarrow\mathcal{G}} = \max_{\bm{u} \in \mathcal{H}} \delta(\bm{u}; \mathcal{G}).
\end{equation}
We modify this distance in two ways.
First, we replace the maximum with the 95th percentile, denoted $d^{95}_{\mathcal{H}\rightarrow\mathcal{G}}$,
for robustness to outliers~\citep{Huttenlocher1993-th}.
Second, we normalize it by the same quantity computed within the human respondents,
$d^{95}_{\mathcal{H}\rightarrow\mathcal{H}}$,
the 95th percentile of the distances from each human respondent to their nearest other human respondent;
this makes the ratio comparable across different choices of $\mathcal{H}$.

We define the \emph{respondent coverage gap} (RCG) and its reverse-direction companion,
the \emph{off-support ratio} (OSR), which share this human-side quantity as the denominator:
\begin{align}
\mathrm{RCG}(\mathcal{H}, \mathcal{G}) &=
\frac{d^{95}_{\mathcal{H}\rightarrow\mathcal{G}}}{d^{95}_{\mathcal{H}\rightarrow\mathcal{H}}}, \label{eq:rc}\\
\mathrm{OSR}(\mathcal{H}, \mathcal{G}) &=
\frac{d^{95}_{\mathcal{G}\rightarrow\mathcal{H}}}{d^{95}_{\mathcal{H}\rightarrow\mathcal{H}}}. \label{eq:osr}
\end{align}
Since $\delta(\bm{u}; \mathcal{S})$ shrinks as $\mathcal{S}$ grows,
we use equally sized $\mathcal{H}$ and $\mathcal{G}$ in our main evaluation (\autoref{sec:cells} and \autoref{sec:appendix:cell}).
RCG measures the extent to which human respondents are left uncovered by the generated ones;
OSR reverses the direction of the numerator while keeping the same denominator, and
thus measures the extent to which generated respondents lie outside the support of the human respondents.
We use RCG as our main metric and OSR as a diagnostic when analyzing failure modes (\autoref{sec:failure_modes}).

\begin{figure}[t]
  \centering
  \includegraphics[width=\columnwidth]{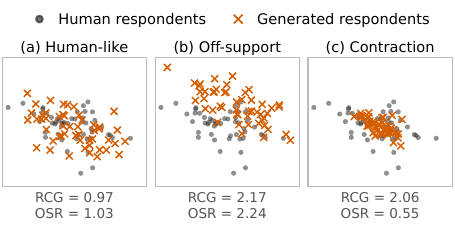}
  \caption{
    Toy examples of the human-like, off-support, and contraction patterns
    on two-dimensional synthetic data (50 human and 50 generated respondents,
    Euclidean distance).
    The panels illustrate representative configurations, not an exhaustive
    classification of RCG and OSR values.
  }
  \label{fig:rcg_osr_toy}
\end{figure}

Both ratios are expected to be near one when $\mathcal{G}$ is an independent
sample of human respondents.
$\mathrm{RCG}>1$ indicates undercoverage of human respondents,
whereas $\mathrm{OSR}>1$ indicates generated respondents unusually far from the
human sample (\emph{off-support}).
A low OSR is not necessarily better: when $\mathrm{RCG}>1$ but $\mathrm{OSR}<1$,
generated respondents stay close to some human respondents while leaving
others poorly covered (\emph{contraction}).
\autoref{fig:rcg_osr_toy} illustrates these patterns:
the off-support and contraction examples have similar RCG values (2.17 and 2.06),
but their OSR values lie on opposite sides of one (2.24 and 0.55).

OSR has a similar form to med.DCR \citep{Lautrup2025-xj},
but addresses the opposite concern (\autoref{sec:compare_distributions}).
For privacy, a synthetic record that is too close to a human record poses a risk.
Accordingly, med.DCR divides the median synthetic-to-real nearest-neighbor distance by
its real-to-real counterpart, with values below one indicating greater concern.
In contrast, OSR uses the 95th percentile to identify generated respondents that are unusually far from the human sample, with values above one indicating off-support generation.

Retaining the distances also sets our ratios apart from the density and coverage of
\citet{Naeem2020-sg}, which test each nearest-neighbor distance against a threshold
(\autoref{sec:compare_distributions}).
Our ratios stay informative no matter how far apart the two sets are:
for a generated set that lies outside all human neighborhood balls,
the coverage of \citet{Naeem2020-sg} is zero however far away it lies, whereas $d^{95}_{\mathcal{H}\rightarrow\mathcal{G}}$ keeps growing with the distance.
Our ratios also do not require the neighborhood size $k$ that sets that threshold;
the only remaining choice is the aggregation percentile, whose effect we examine in Appendix~\ref{sec:appendix:sensitivity_of_percentile}.
\section{Evaluation Setup}\label{sec:setup}
\subsection{Datasets}\label{sec:datasets}
We use two surveys of US residents:
the World Values Survey (WVS; \citealp{wvs7}), on social, religious, and political values,
and the American National Election Studies (ANES; \citealp{anes}), on vote choice and partisanship.\footnote{
We use only human respondents who give valid answers to
at least 10 of the 100 questions (2,594 of 2,596 in WVS and 5,453 of 5,475 in ANES).}
For WVS, we use the United States sample of wave 7 (2,594 respondents)
and uniformly sample 100 questions from the 144 used by \citet{Liu2026-fl};
the dataset provides 17 attribute variables, such as sex, age, education, and income.
For ANES, we use the 2024 wave of its Time Series Cumulative Data File (February 2026 release; 5,453 respondents)
and uniformly sample 100 questions from the 149 remaining after excluding administrative,
demographic, and geographic items;
the dataset provides 11 attribute variables including party identification
and liberal--conservative ideology.\footnote{Party identification and ideology
are political rather than demographic, but we
condition on them following prior silicon-sampling studies on ANES
\citep{Argyle2023-kp,Bisbee2024-bw} and, for brevity, call all conditioning
variables demographic attributes.
}
Appendix~\ref{sec:appendix:demographic_attributes} lists all attributes and their binning.

In both datasets, the 100 questions are drawn once, and the same draw is used for all models and cells.
Of the 100 questions, 30 (WVS) and 32 (ANES) are numeric rating scales and
the rest are multiple-choice questions with ordinal or nominal options,
so each respondent is represented by a vector of $M = 100$ answers.

\subsection{Conditioning Cells}\label{sec:cells}
A \emph{cell} fixes the values of $L$ demographic attributes on which generation is conditioned
(\autoref{sec:conditional_generation}), e.g., (female, 30--39) with $L = 2$;
every cell also includes US residence, which all respondents share (\autoref{sec:datasets}).
We keep cells with at least 100 human respondents and no missing attribute values,
and sample up to 100 cells per $L$, which gives 730 cells for WVS and 603 for ANES
($L = 0$--$8$; Appendix~\ref{sec:appendix:cell}).
Each cell $c$ gives a pair $(\mathcal{H}_c, \mathcal{G}_c)$:
the answer vectors of the human respondents in $c$ and of the 100 generated respondents
whose prompts carry the attribute values of $c$ (\autoref{sec:prompting}).
All metrics in \autoref{sec:respondent_coverage} are computed per cell.

\subsection{Prompts}\label{sec:prompting}
Our prompt has four parts: a task instruction, a persona description, few-shot examples\footnote{
We use up to three question--answer pairs as examples
and evaluate the effect of including the full response history in Appendix~\ref{sec:appendix:full_history}.
} of the respondent's earlier answers, and the question with its choices.
To reduce sensitivity to wording and ordering,
we prepare three instruction formats and three persona formats following prior
studies~\citep{Argyle2023-kp,Santurkar2023-vx,Bisbee2024-bw,Taday-Morocho2026-yg,Moon2024-ql,AlKhamissi2024-tj},
and randomly assign one instruction format and one persona format to each generated respondent.
We also randomize the question order and the order of attributes in the persona.
For questions without a natural ordering of answer choices,
we shuffle the choices to reduce LLM preferences for particular answer labels~\citep{Zheng2024-kr,Dominguez-Olmedo2024-vn}.
The full templates and two example prompts are in Appendix~\ref{sec:appendix:prompt}.

\subsection{Models}\label{sec:models}
\paragraph{Proprietary.}
We use two proprietary models: gpt-5.4-nano (OpenAI) and 
gemini-3.1-flash-lite (Google, hereafter gemini-flash-lite). 
Additionally, in an ablation study (\autoref{sec:ablation}), we use gpt-5.4 and
gemini-3.1-pro-preview (Google, hereafter gemini-pro).
We leave the sampling parameters (temperature, top\_p, and top\_k where available) at their vendor
defaults and set only the reasoning effort to the lowest level available: \texttt{none} 
for the OpenAI models and \texttt{minimal} for the Google models.

\paragraph{Open-weight.}
We use four open-weight models: Qwen3.5-9B \citep[Alibaba;][]{qwen3.5},
gemma-4-12B-it~\citep[Google, hereafter gemma-4-12B;][]{gemmateam2026gemma4},
gpt-oss-20b~\cite{Agarwal2025-xs},
and Llama-3.3-70B-Instruct~\citep[Meta, hereafter Llama-3.3-70B;][]{Grattafiori2024-fh}.
We use their default decoding settings (Appendix~\ref{sec:appendix:model_parameters}) 
without tuning, to evaluate off-the-shelf use.

\subsection{Distance}\label{sec:distance}
Because the questions are of mixed types (\autoref{sec:datasets}), 
we use the Gower distance~\citep{Gower1971-kf} as $d(\bm{u}, \bm{v})$
(details in Appendix~\ref{sec:appendix:gower_distance}).
The Gower distance averages per-question distances, excluding the $m$-th question
when either $u_m$ or $v_m$ is missing.\footnote{
For generated respondents, we also treat an answer as missing when the response cannot be parsed,
for example, when the model refuses to answer.
We examine the effect of missing values in Appendix~\ref{sec:appendix:missing_value}.}

\subsection{Baselines}\label{sec:baselines}
We select three non-LLM baselines.
\emph{Marginals} samples each answer $u_m$ independently from the marginal distribution of the human respondents $\mathcal{H}$.
\emph{Copula} draws the whole answer vector $\bm{u}$ from a Gaussian copula~\citep{sklar1959fonctions}: 
the marginals are estimated as in Marginals, and the questions are coupled through a
correlation matrix estimated with Ledoit--Wolf shrinkage~\citep{Ledoit2004-ww},
as fitting uses fewer respondents than questions in about half of the cells.
\emph{LCA} draws $\bm{u}$ from a two-class latent class model~\citep{Goodman1974-jn}, that is, a mixture of two class-specific Marginals generators, fitted with StepMix~\citep{Morin2025-gd}.
Because these baselines are fitted to the same respondents against which they are evaluated,
they serve as references for what a generator can achieve with access to the cell’s respondents,
rather than as direct competitors to the LLMs (fitting details in Appendix~\ref{sec:appendix:hold_out}).

\section{Results}\label{sec:results}
\subsection{Main Results}\label{sec:main_results}
  \begin{table}[t]
  \centering
  \small
  \setlength{\tabcolsep}{2.5pt}
  \begin{tabular}{@{}lrrrrrr@{}}
  \toprule
   & \multicolumn{3}{c}{WVS} & \multicolumn{3}{c}{ANES} \\
  \cmidrule(lr){2-4} \cmidrule(lr){5-7}
   & RCG & Cal. & OSR & RCG & Cal. & OSR \\
  \midrule
  gpt-5.4-nano & 1.272 & 1.150 & 0.994 & 1.370 & 1.232 & 1.173 \\
  gemini-flash-lite & 1.218 & 1.126 & 0.840 & 1.353 & 1.178 & 0.838 \\
  Qwen3.5-9B & 1.273 & 1.145 & 0.992 & 1.347 & 1.217 & 1.213 \\
  gemma-4-12B & 1.199 & 1.137 & 0.793 & 1.332 & 1.194 & 0.942 \\
  gpt-oss-20b & 1.269 & 1.140 & 1.013 & 1.346 & 1.202 & 1.144 \\
  Llama-3.3-70B & 1.275 & 1.147 & 0.722 & 1.277 & 1.198 & 0.853 \\
  \midrule
  Marginals & 1.158 & \multicolumn{1}{c}{--} & 0.989 & 1.250 & \multicolumn{1}{c}{--} & 1.078 \\
  Copula & 1.041 & \multicolumn{1}{c}{--} & 0.946 & 1.091 & \multicolumn{1}{c}{--} & 0.976 \\
  LCA & 1.105 & \multicolumn{1}{c}{--} & 0.959 & 1.143 & \multicolumn{1}{c}{--} & 0.950 \\
  \bottomrule
  \end{tabular}
  \caption{The respondent coverage gap (RCG) and the off-support ratio (OSR) for
  each generator on each dataset. Cal.\ is the RCG after marginal calibration.
  OSR is reported before calibration.
}
  \label{table:main}
  \end{table}

For each cell, we compute RCG and OSR for 50 random draws of 100 human respondents.
We first average the scores over draws within each cell, then average the resulting cell-level scores across cells.
We also confirm that RCG behaves as intended;
in each cell $c$,\footnote{
We use the cell only if it contains at least 200 human respondents.
This leaves 203 of the 730 cells for WVS and 196 of the 603 cells for ANES.
}
we replace the generated set $\mathcal{G}_c$ with a separate sample of 100 human respondents
drawn from $\mathcal{H}_c$, disjoint from the 100 respondents used for evaluation,
and compute RCG exactly as above.
The average RCG is 1.001 (WVS) and 1.000 (ANES).
The same replacement gives an average OSR of 1.001 (WVS) and 1.002 (ANES).
Results under the hold-out fit setting are reported in Appendix~\ref{sec:appendix:hold_out}.

\autoref{table:main} shows the RCG and OSR for each generator and dataset (Appendix~\ref{sec:appendix:rcg} shows RCG on WVS and ANES by the number of conditioning attributes $L$).
All uncalibrated LLM RCG values in \autoref{table:main} exceed 1.0 and the values of all three baselines,
including the simplest baseline, Marginals.
For every LLM on both datasets, the 95\% confidence interval for the RCG difference
from Marginals excludes zero (\autoref{table:appendix:main_ci} in Appendix~\ref{sec:appendix:main_95}).
Thus no LLM covers the human respondents as well as another human sample does (1.000--1.001), or even
as well as Marginals, which ignores all dependence between questions.
RCG shows no consistent ordering by model family,
weight availability (proprietary vs. open-weight), or model size.

\paragraph{Calibrating marginal distributions.}
We next ask whether correcting per-question response distributions closes the respondent coverage gap.
We use the empirical human marginals in each evaluation cell as oracle targets,
applying quantile mapping~\citep{Gudmundsson2012-ik,Cannon2015-ba} to ordered responses and
minimum-rewrite remapping to nominal responses (Appendix~\ref{sec:appendix:calibration}).
Calibration (Cal.\ in \autoref{table:main}) reduces RCG for all six LLMs on both datasets.
The calibrated RCG values range from 1.126 to 1.150 on WVS and from 1.178 to 1.232 on ANES.
They are lower than those of Marginals but higher than the human reference and those of
both Copula and LCA.
All differences from the three baselines are statistically significant (\autoref{table:appendix:calibration_diff_ci} in Appendix~\ref{sec:appendix:main_95}).
Thus, this item-wise correction improves respondent coverage but does not restore human-level coverage.

\subsection{Failure Modes}\label{sec:failure_modes}
Using the uncalibrated OSR column of \autoref{table:main}, we next examine how each LLM fails to cover the human respondents.
The baselines (Marginals, Copula, and LCA) have OSR at or near one (0.946--1.078) on both datasets.
RCG alone does not show how a model fails: on ANES, the six LLMs span
1.277--1.370 in RCG but 0.838--1.213 in OSR.

In both datasets, OSR splits the six LLMs into two groups:
the upper group (gpt-oss-20b, gpt-5.4-nano, and Qwen3.5-9B) and 
the lower group (gemma-4-12B, gemini-flash-lite, and Llama-3.3-70B).
The OSRs of the lower group are below one (0.722--0.840 on WVS and 0.838--0.942 on ANES) and the RCGs are above one.
By the definitions in \autoref{sec:respondent_coverage}, this combination indicates contraction.
In the upper group, the OSRs on ANES are above one (1.144--1.213),
which is the off-support pattern.
By contrast, on WVS, the OSRs are close to one (0.992--1.013).

Prior work has reported both patterns under other names,
though not at the level of individual respondents’ joint answer profiles:
contraction as reduced variance~\citep{Bisbee2024-bw} and flattening~\citep{Wang2025-zs}; and
the off-support pattern as misportrayal~\citep{Wang2025-zs}.

\subsection{Prior Metrics Can Mask Coverage Failures}
\begin{figure}[t]
  \centering
  \includegraphics[width=\columnwidth]{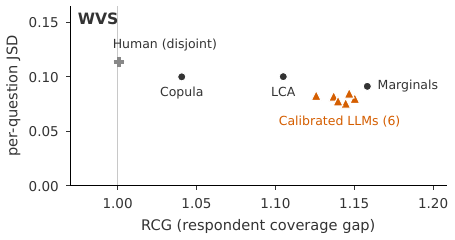}
  \caption{
Per-question JSD (mean over questions and cells) against RCG on WVS 
for three baselines and six LLMs.
For the LLMs, both metrics are computed after in-cell marginal calibration.
Human (disjoint) uses only the 203 cells with at least 200 human respondents.
Larger values are worse on both axes.
}
  \label{fig:calibrated_rcg_jsd_wvs}
\end{figure}
\begin{figure}[t]
  \centering
  \includegraphics[width=\columnwidth]{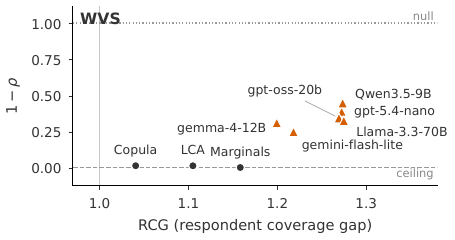}
  \caption{
  Correlation-structure mismatch ($1-\rho$) against RCG for all nine generators on WVS.
  Larger values are worse on both axes.
  Triangles are LLMs; circles are the statistical baselines.
  The dotted horizontal line marks the permutation null (each question column
  permuted across cells, preserving marginals but destroying correlations).
  The dashed horizontal line marks the split-half ceiling of the human data (\autoref{fn:ceiling}).
}
  \label{fig:williams_rcg}
\end{figure}
RCG is not a repackaging of existing metrics: marginal fidelity does not separate generators that differ in RCG, and an aggregate correlation metric is saturated by a generator that samples answers independently across questions.
We measure marginal fidelity by the mean per-question
Jensen–Shannon distance~\citep[JSD;][]{Durmus2024-hc} and
dependence across questions by the correlation-structure score of \citet{Williams2026-bs}.

\autoref{fig:calibrated_rcg_jsd_wvs} shows per-question JSD against RCG on WVS for the
human reference and for the generators that match the human answer distributions:
Marginals, Copula, LCA, and the six calibrated LLMs 
(the same figure for ANES is in Appendix~\ref{sec:appendix:prior_metrics}).
Their JSD values lie within a narrow range (0.08--0.11), whereas their RCG ranges from 1.04
(Copula) to 1.16 (Marginals).
The calibrated LLMs have the lowest JSD, as expected from calibrating to the human marginals
in each evaluation cell (\autoref{sec:main_results}),
yet their RCG remains above that of Copula and LCA.
Per-question JSD depends only on each question's answer distribution,
not on how a respondent's answers combine across questions.
RCG compares whole answer profiles and therefore reflects this combination.

\autoref{fig:williams_rcg} plots the correlation-structure mismatch
($1 - \rho$, where $\rho$ is the correlation-structure score of \citealp{Williams2026-bs})
against RCG for all generators on WVS
(the same figure for ANES is in Appendix~\ref{sec:appendix:prior_metrics}).
Marginals achieves $\rho = 0.994$ on WVS (split-half ceiling\footnote{
Following \citet{Williams2026-bs}, we split the human respondents of each cell into two random halves and compute $\rho$ between the two halves; the ceiling is the mean over 50 random splits.
\label{fn:ceiling}} of the human data: 0.995)
and $0.999$ on ANES (split-half ceiling of the human data: 0.999).
Its RCG, however, is $1.158$ on WVS and $1.250$ on ANES.

\begin{proposition}[Proof in Appendix~\ref{sec:appendix:proof_prop1}]\label{prop:williams}
For any set of generated respondents $\mathcal{G}_c$ in a cell $c$, let $\mathcal{G}'_c$ be
obtained by permuting, separately for each question $q_m$, the answers to $q_m$ among the
respondents of $\mathcal{G}_c$ who answered it.
Then $\mathcal{G}'_c$  has the same marginal for every question as $\mathcal{G}_c$,
and the correlation-structure score of \citet{Williams2026-bs} takes identical
values on $\{\mathcal{G}_c\}_c$ and $\{\mathcal{G}'_c\}_c$.
\end{proposition}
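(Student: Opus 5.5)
The plan is to show that the correlation-structure score of \citet{Williams2026-bs} depends on the generated data $\{\mathcal{G}_c\}_c$ only through the per-cell, per-question answer distributions, and that these distributions are invariant under the within-question permutation in the statement.

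First, the marginal claim. For a fixed cell $c$ and question $q_m$, let $R_{c,m}\subseteq\mathcal{G}_c$ be the respondents who answered $q_m$. A permutation $\pi_m$ of $R_{c,m}$ only reassigns the answers $\{u_m:\bm{u}\in R_{c,m}\}$ among the same respondents. So the multiset of non-missing answers to $q_m$ is unchanged, and so is the set of respondents with a missing answer to $q_m$. The empirical distribution of answers to $q_m$ in $\mathcal{G}'_c$ therefore coincides with that in $\mathcal{G}_c$, including the count of non-missing responses. Any quantity computed from this multiset is also unchanged. That covers the mean, the proportions per option, and the number of valid answers.

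Second, I will make the dependence of the score on the data explicit, which is the step I expect to be the main obstacle. I will restate the definition of the score in a form that exposes its inputs. Following \citet{Williams2026-bs}, the score is built in three stages:
\begin{enumerate}
\item[(i)] Each cell $c$ and question $q_m$ is reduced to a summary statistic $s_{c,m}$ of the answers to $q_m$ in that cell, such as a mean or option proportions.
\item[(ii)] Inter-question associations are computed across cells from the vectors $(s_{c,m})_c$, separately for the human and the generated side.
\item[(iii)] The two association matrices are compared, e.g.\ by correlating their off-diagonal entries, which yields $\rho$.
\end{enumerate}
The point to verify carefully is that stage (i) uses only the answers to a single question within a cell, and never the pairing of answers within a respondent. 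This is consistent with the permutation null described in \autoref{fig:williams_rcg}, which permutes question columns across cells and is described as destroying correlations. That description shows the dependence the score sees lives at the cell level, not the respondent level. If the definition involves any weighting, such as by the number of valid answers, I will check that the weights are also per-question counts, which the first step already shows to be invariant.

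Finally, I combine the two steps. By the first step, $s_{c,m}(\mathcal{G}'_c)=s_{c,m}(\mathcal{G}_c)$ for every $c$ and $m$. So the cross-cell association matrix in stage (ii) is identical for $\{\mathcal{G}_c\}_c$ and $\{\mathcal{G}'_c\}_c$. The human side is untouched, so the comparison in stage (iii) returns the same $\rho$. Nothing else is needed: each $\pi_m$ is arbitrary and independent across questions and cells. In particular, the choice that breaks every within-respondent dependence (Marginals-like data) leaves $\rho$ unchanged, which is the sense in which the score is saturated by Marginals.
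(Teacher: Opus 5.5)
Your proposal is correct and follows essentially the same route as the paper's proof. In both, the permutation preserves each within-cell multiset of answers and hence each per-cell, per-question summary, and the score (cross-cell Pearson correlations between questions, followed by the correlation of their upper-triangular entries with the human side) is a function of those summaries alone. The paper settles the ``main obstacle'' you flag directly from the definition, not from the permutation-null heuristic: it fixes your stage (i) as the mean answer $A_{c,m}$ among the respondents of cell $c$ who answered $q_m$, restricted to rating-scale and ordinal questions.
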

The permutation destroys the alignment of answers within respondents, yet the score is unchanged.
In particular, applying such within-cell permutations to the human
data yields a correlation-structure score of exactly one against the original human data.
This invariance is not a defect for measuring
aggregate cross-cell structure; rather, it shows that the score cannot
substitute for respondent-level evaluation.

\subsection{Number of Generated Respondents}\label{sec:ngen}
\begin{table}[t]
  \centering
  \small
  \setlength{\tabcolsep}{4pt}
   \begin{tabular}{lrrrr}
  \toprule
   & \multicolumn{3}{c}{RCG at $|\mathcal{G}_c| =$} & Slope \\
  \cmidrule(lr){2-4}
   & 100 & 1{,}000 & 10{,}000 & per decade \\
  \midrule
  gpt-5.4-nano & 1.270 & 1.197 & -- & $-0.073^{\dagger}$ \\
  gemini-flash-lite & 1.222 & 1.159 & -- & $-0.060^{\dagger}$ \\
  Qwen3.5-9B & 1.271 & 1.193 & 1.130 & $-0.070$ \\
  gemma-4-12B & 1.204 & 1.141 & 1.094 & $-0.056$ \\
  gpt-oss-20b & 1.267 & 1.189 & 1.124 & $-0.072$ \\
  \midrule
  Marginals & 1.161 & 1.074 & 1.003 & $-0.078$ \\
  Copula & 1.041 & 0.931 & 0.847 & $-0.095$ \\
  LCA & 1.109 & 1.023 & 0.954 & $-0.077$ \\
  \bottomrule
  \end{tabular}
  \caption{
  RCG (cell mean over the WVS subset of Section~\ref{sec:ablation})
  at three sizes of the generated set $|\mathcal{G}_c|$.
  Slope is the median over cells of the per-cell change in RCG per tenfold
  increase in $|\mathcal{G}_c|$.
  --: not generated; $\dagger$: fitted over one decade only, for reference.
  }
  \label{table:ngen_rcg_wvs}
\end{table}
As described in \autoref{sec:respondent_coverage}, RCG shrinks as $|\mathcal{G}_c|$ grows for any generator.
We ask how many generated respondents are needed for RCG to reach one.
We change $|\mathcal{G}_c|$ from 100 to 1{,}000 (10 $\times$ the 100 human respondents) for the two proprietary models
(gpt-5.4-nano and gemini-flash-lite)\footnote{API cost limits the proprietary models to 1{,}000.} and
to 10{,}000 (100 $\times$) for the three open-weight models (Qwen3.5-9B, gemma-4-12B, and gpt-oss-20b).\footnote{
Llama-3.3-70B was excluded due to computational cost.
}

\autoref{table:ngen_rcg_wvs} shows the RCG on the WVS subset of \autoref{sec:ablation}
at different numbers of generated respondents $|\mathcal{G}_c|$
and the slope of RCG regressed on $\log_{10}|\mathcal{G}_c|$ (the same table for ANES is in Appendix~\ref{sec:appendix:100x}). 
The RCG of the two proprietary models remains above one even at $|\mathcal{G}_c| = 1{,}000$,
and that of the three open-weight models remains above one even at $|\mathcal{G}_c| = 10{,}000$,
whereas that of Marginals falls to 1.003 at $|\mathcal{G}_c| = 10{,}000$.
OSRs change by less than 0.01 across different numbers of generated respondents (Appendix~\ref{sec:appendix:100x}).

Extrapolating the fitted trends linearly in $\log_{10}|\mathcal{G}_c|$
from $|\mathcal{G}_c|$ = 10{,}000 would place RCG = 1 at about $5 \times 10^5$ to $7 \times 10^5$
generated respondents per cell for the three open-weight models,
or 5{,}000 to 7{,}000 times the 100 human respondents.\footnote{Since the decrease per decade is already smaller in the second decade than in the first, these figures indicate the scale rather than the required sample size.}

\subsection{Robustness to Prompting, Inference Settings, and Model Tier}\label{sec:ablation}
We examine the robustness of the results in \autoref{sec:main_results} on a subset of the cells
in \autoref{sec:cells}: five cells per level of $L$, or all cells at levels with
fewer (41 cells for WVS and 37 for ANES).
The values in this subsection therefore differ from \autoref{table:main}.
Settings are in Appendix~\ref{sec:appendix:ablation}.

\paragraph{Sensitivity to temperature.}
We vary the temperature from 0.1 to 1.5.
RCG changes by at most 0.033 within each model, while OSR rises with temperature,
most for gpt-5.4-nano, Qwen3.5-9B, and gpt-oss-20b (Appendix~\ref{sec:appendix:temp}).

\paragraph{Sensitivity to reasoning effort.}
We also vary the reasoning effort, which controls how much the model reasons before answering,
over \texttt{low}, \texttt{medium}, and \texttt{high} for gpt-5.4-nano (default \texttt{none}) and
over \texttt{medium} and \texttt{high} for gpt-oss-20b (default \texttt{low}).
Raising the reasoning effort shifts both models toward contraction,
and the RCG of gpt-5.4-nano falls mostly from \texttt{none} to \texttt{low} (Appendix~\ref{sec:appendix:ret_effort}).

\paragraph{Sensitivity to model tier.}
We use gpt-5.4 from OpenAI, a higher-tier model of the same generation as gpt-5.4-nano,
and gemini-3.1-pro-preview (gemini-pro), a frontier model from Google.
Using higher-tier models reduces RCG except for gemini-pro on WVS,
and both shift toward contraction.
On ANES, gpt-5.4-nano at high effort and 
both higher-tier models have RCG point estimates 0.004–0.016 below Marginals
but remain above Copula and LCA (Appendices~\ref{sec:appendix:ret_effort} and  \ref{sec:appendix:ret_tier}).

\paragraph{Sensitivity to response history.}
We replace the up-to-three earlier answers in the prompt with
the respondent’s full response history (see \autoref{fn:history_size}).
RCG falls in all twelve model–dataset pairs, and in two of them (gemma-4-12B on WVS and Llama-3.3-70B on ANES) significantly below Marginals, but every pair remains above Copula (Appendix~\ref{sec:appendix:full_history}).
\section{Conclusions}\label{sec:conclusions}
We proposed the respondent coverage gap to evaluate silicon sampling at the respondent level.
No evaluated LLM achieves human-level coverage on either survey.
Under our main settings, all trail a baseline that samples answers independently across questions.
Across all tested temperatures, response histories, reasoning efforts,
model tiers, and generated sample sizes, every evaluated LLM has a
higher RCG than the Gaussian copula baseline at the same generated sample size.
Existing per-question metrics miss this failure.
Marginal calibration reduces RCG below that of Marginals, but not
that of Copula.
Before a synthetic population stands in for a human one, its respondents, not only its answers, should be shown to match.

\section*{Limitations}\label{sec:limitations}
\paragraph{Scope of the LLM analysis.}
RCG and OSR are defined only through distances between respondents'
answer profiles, so they do not depend on a domain or population.
Our analysis uses only US respondents from two surveys,
WVS wave 7~\citep{wvs7} and the 2024 wave of ANES~\citep{anes}.
Our findings on LLMs (RCG above one for all six models,
and the contraction and off-support patterns) are thus shown only for these surveys;
testing them in other areas requires further experiments.
Such areas include psychological experiments~\citep{Binz2025-rk},
online purchasing~\citep{Lu2026-xu}, and human--computer interaction~\citep{Hamalainen2023-mv}.
Non-US and multilingual settings~\citep{Ma2025-qr,Liu2026-fl} are also outside our analysis.

\paragraph{Evaluating alternative prompting approaches with RCG.}
We evaluate LLMs with off-the-shelf prompting, as practitioners typically use them.
Other prompting approaches, such as backstories~\citep{Moon2024-ql},
criteria-based prompting~\citep{Hayati2024-te}, Verbalized Sampling~\citep{Zhang2026-to},
and String Seed of Thought~\citep{Misaki2025-st}, are not examined.
Training-based methods, such as supervised
fine-tuning~\citep{Cao2025-bb,Suh2025-ve,Binz2025-rk,Krsteski2026-na}
and representation-level intervention~\citep{Wang2026-hj},
are also used to improve the accuracy and diversity of silicon sampling.
These methods target per-question accuracy or response distributions;
RCG can test whether their improvements extend to the level of individual respondents.

\section*{Ethical Considerations}\label{sec:ethical}
\paragraph{Potential risk of silicon sampling.}
Silicon sampling could misportray or flatten identity groups~\citep{Wang2025-zs} or
caricature them~\citep{Cheng2023-nq}, which may reinforce stereotypes when
synthetic populations replace human respondents.
Our metrics, RCG and OSR, can flag patterns consistent with off-support generation and contraction (\autoref{sec:failure_modes}) at the level of individual respondents.
However, the metrics are computed only on the evaluated questions and cells.

\paragraph{Data and terms of use.}
We use the WVS wave 7~\citep{wvs7} and ANES wave 2024~\citep{anes}.
We follow their terms of use: we use the data only for research,
do not redistribute the data files, and do not investigate specific survey respondents.

\paragraph{Use of AI.}
We used AI assistants to polish the manuscript's English
and assist with experimental code.
The authors reviewed and verified all manuscript content and code.

\bibliography{custom}

\appendix
\clearpage 
\section{Details of the Experiment in the Introduction}\label{sec:appendix:teaser}
In \autoref{sec:intro}, we use the same 2,594 respondents of WVS~\citep{wvs7}
and 100 questions as in the main text (\autoref{sec:datasets}).
We fit Marginals and Copula (\autoref{sec:baselines}) on all 2,594 human respondents and
generate 2,594 respondents from each for the scatter plots and the nearest-neighbor distances;
only the bar chart uses 50,000 generated respondents per generator.
For the top panels of \autoref{fig:teaser}, we apply principal coordinates analysis (PCoA) to
the Gower distance matrix of the pooled 7,782 points (Human, Marginals, and Copula),
treating the Gower distance as a squared distance.
We adopt the first two principal coordinates as the axes and plot 100 randomly chosen points per group.
In the bottom-right panel, the curves are kernel density estimates of the Gower distance
from each human respondent to the nearest other human respondent (Human)
and to the nearest generated respondent (Marginals and Copula),
as defined in \autoref{sec:respondent_coverage}.

\section{Demographic Attributes}\label{sec:appendix:demographic_attributes}
 \begin{table}[t]
  \centering
  \footnotesize
  \setlength{\tabcolsep}{4pt}
  \begin{tabular}{lcc}
  \toprule
  Attribute & WVS & ANES \\
  \midrule
  sex & 2 & 2 \\
  age (10-year bins; 80+ top-coded) & 6 & 7 \\
  education & 6 & 5 \\
  employment status & 6 & 4 \\
  income level & 8 & 5 \\
  marital status & 4 & 5 \\
  religion & 3 & 4 \\
  immigrant status & 2 & -- \\
  citizenship & 1 & -- \\
  chief wage earner & 2 & -- \\
  family savings & 4 & -- \\
  social class & 4 & -- \\
  occupational group & 6 & -- \\
  sector of employment & 3 & -- \\
  household size & 6 & -- \\
  number of children (0, 1, 2, 3, 4+) & 5 & -- \\
  living with parents & 2 & -- \\
  race & -- & 5 \\
  Hispanic origin & -- & 3 \\
  party identification (7-point) & -- & 7 \\
  liberal--conservative ideology & -- & 8 \\
  \midrule
  Number of attributes & 17 & 11 \\
  \bottomrule
  \end{tabular}
  \caption{Demographic attributes and their numbers of levels.}
  \label{table:all_demographic_attributes}
  \end{table}

\autoref{table:all_demographic_attributes} shows demographic attributes and their numbers of
levels.\footnote{
Levels with fewer than 100 respondents cannot form a cell and are not counted; missing values are also excluded.
}
We bin age and the number of children as shown in the table and keep the original categories of the other attributes.
As we described in \autoref{sec:datasets}, we treat party identification and
liberal--conservative ideology as demographic attributes for ANES.
Ideology has eight levels because we keep
``DK; haven't thought much about it'' as a level in addition to the seven-point scale.

\section{Prompts}\label{sec:appendix:prompt}
Each query is a single user message with four blocks separated
by blank lines: an instruction, a persona description, few-shot examples, and a question block.
The instruction and the persona description are each drawn from three variants;
the variant pair and the order of attributes in the persona description are assigned to
each generated respondent at random and kept fixed across its 100 questions.
Each generated respondent answers its 100 questions in a random order.
The few-shot examples carry up to three of the respondent's own previous answers,
drawn at random without replacement from the questions it has already
answered (\autoref{sec:prompting}).
\autoref{fig:appendix:prompt_examples} shows two full prompts from the main runs.

\begin{figure*}[p]
\begin{tcbraster}[raster columns=2, raster equal height=rows,
                  raster column skip=1em, raster valign=top]
\begin{promptbox}{Multiple choice / QA persona / i1}
\begin{PromptText}
You are a survey respondent. Answer the following World Values Survey (WVS) question from the point of view of the given persona.

It is 2017.
Interviewer: Were you born in this country or are you an immigrant to this country?
Me: Born in this country
Interviewer: What is your sex?
Me: Male
Interviewer: In what country do you live?
Me: The United States
Interviewer: What is your age in years?
Me: between 40 and 49

Interviewer: On this list are various groups of people. Could you please mention any that you would not like to have as neighbors? Unmarried couples living together
Me: Mentioned
Interviewer: Could you tell me how secure do you feel these days?
Me: Quite secure
Interviewer: Which, if any, of the following do you believe in? God
Me: Yes

Answer the following question with a single choice.

Choose one of the following numbers (1 2).
Output ONLY the answer tag in the form <ans>N</ans> (for example, <ans>1</ans>). Do NOT include any explanation, justification, reasoning, or surrounding text.

# Question

Which, if any, of the following do you believe in?

Heaven

# Choices

- 1: No
- 2: Yes

Respond with only <ans>N</ans>. No other text.
\end{PromptText}
\end{promptbox}
\begin{promptbox}{Rating scale / 2nd-person persona / i2}
\begin{PromptText}
Below you will be asked to provide a short description of your demographic information, and then answer some questions.

Description: It is 2017. You live in the United States. You were born in the United States. Your sex is Male. You are between 50 and 59 years old.

You answered "Yes" to the question "Which, if any, of the following do you believe in? God".
You answered "Rather important" to the question "For each of the following, indicate how important it is in your life. Would you say it is: Family".
You answered "Quite secure" to the question "Could you tell me how secure do you feel these days?".

Answer the following question with a single integer.

Answer with an integer between 1 (Never justifiable) and 10 (Always justifiable).
A larger number means closer to "Always justifiable".
If you cannot decide, answer 99.
Output ONLY the answer tag in the form <ans>N</ans> (for example, <ans>5</ans>). Do NOT include any explanation, justification, reasoning, or surrounding text.

# Question

Please tell me for each of the following actions whether you think it can always be justified, never be justified, or something in between, using this card.

Cheating on taxes if you have a chance

Respond with only <ans>N</ans>. No other text.
\end{PromptText}
\end{promptbox}
\end{tcbraster}
\caption{Two full prompts from the main run (WVS).
Left: a multiple-choice question with the QA persona format and instruction i1.
Right: a rating-scale question with the second-person persona format and instruction i2.
Both personas are conditioned on sex, immigrant status, and age,
and both prompts carry three few-shot examples taken from
the same generated respondent's previous answers.
Texts are reproduced verbatim, including the original WVS question wording.
The choice order in the left example is the shuffled order shown to the model.}
\label{fig:appendix:prompt_examples}
\end{figure*}

\paragraph{Instructions.}
We use three instruction texts from prior work, with the adjustments described below.
i1 and i2 are placed at the top of the prompt;
i3 is placed between the few-shot examples and the question block.
In i1, the survey name is ``World Values Survey (WVS)'' for WVS and
``American National Election Studies (ANES) survey'' for ANES.
For i3, we keep only the first two sentences of the original;
the remaining ones specify an answer format, which our question block replaces.

\begin{promptbox}[fontupper=\small]{Instruction Variants}
i1~\citep{Taday-Morocho2026-yg}: {\ttfamily You are a survey respondent.
Answer the following World Values Survey (WVS) question from the point of view of the given persona.}
\par\smallskip
i2~\citep{Moon2024-ql}: {\ttfamily Below you will be asked to provide a short description of your
demographic information, and then answer some questions.}
\par\smallskip
i3~\citep{AlKhamissi2024-tj}: {\ttfamily Answer the following question from this perspective.
Others will read what you choose; your goal is to convince them it was chosen
from the perspective of the persona described above.}
\end{promptbox}

\paragraph{Persona descriptions.}
We use three formats: an interview transcript (QA;~\citealp{Argyle2023-kp}),
a second-person description~\citep{Bisbee2024-bw},
and a first-person description~\citep{Argyle2023-kp,Santurkar2023-vx,Moon2024-ql}.
Residence in the United States is added to every persona as an additional attribute.
Every persona description starts with the survey year (``It is 2017.'' for WVS and ``It is 2024.'' for ANES), following \citet{Bisbee2024-bw}.
The following are the three formats for one cell (sex, immigrant status, and age)
with the same attribute order.

\begin{promptbox}{QA Format}
\begin{PromptText}
It is 2017.
Interviewer: What is your sex?
Me: Male
Interviewer: In what country do you live?
Me: The United States
Interviewer: Were you born in this country or are you an immigrant to this country?
Me: Born in this country
Interviewer: What is your age in years?
Me: between 40 and 49
\end{PromptText}
\end{promptbox}

\begin{promptbox}{Second-Person Format}
\begin{PromptText}
Description: It is 2017. Your sex is Male. You live in the United States. You were born in the United States. You are between 40 and 49 years old.
\end{PromptText}
\end{promptbox}

\begin{promptbox}{First-Person Format}
\begin{PromptText}
Description: It is 2017. My sex is Male. I live in the United States. I was born in the United States. I am between 40 and 49 years old.
\end{PromptText}
\end{promptbox}

\paragraph{Few-shot examples.}
The few-shot examples follow the persona format.
The \{answer\} slot holds the text of the selected option for
multiple-choice questions and the number for rating scales, including the don't-know code;
unparsable answers are skipped.
Line breaks inside a question text are collapsed into single spaces in few-shot examples.
This use of earlier answers as context follows \citet{Moon2024-ql}.

\begin{promptbox}{Few-Shot Example Formats}
\begin{PromptText}
Interviewer: {question}
Me: {answer}

You answered "{answer}" to the question "{question}".

I answered "{answer}" to the question "{question}".
\end{PromptText}
\end{promptbox}

\paragraph{Question block.}
The question block uses a fixed template around the question and its answer choices.
For multiple-choice questions, the numeric labels (1, 2, \ldots, K) remain fixed.
For each respondent–question pair, we shuffle the answer texts if the choices have no natural order; otherwise, we preserve their natural order.

\begin{promptbox}{Multiple-Choice Question Template}
\begin{PromptText}
Answer the following question with a single choice.

Choose one of the following numbers (1 2 ... K).
Output ONLY the answer tag in the form <ans>N</ans> (for example, <ans>1</ans>). Do NOT include any explanation, justification, reasoning, or surrounding text.

# Question

{question text}

# Choices

- 1: {answer text shown at position 1}
- ...
- K: {answer text shown at position K}

Respond with only <ans>N</ans>. No other text.
\end{PromptText}
\end{promptbox}

\begin{promptbox}{Rating-Scale Question Template}
\begin{PromptText}
Answer the following question with a single integer.

Answer with an integer between {min} ({label of min}) and {max} ({label of max}).
A larger number means closer to "{label of max}".
If you cannot decide, answer {code for don't know}.
Output ONLY the answer tag in the form <ans>N</ans> (for example, <ans>5</ans>). Do NOT include any explanation, justification, reasoning, or surrounding text.

# Question

{question text}

Respond with only <ans>N</ans>. No other text.
\end{PromptText}
\end{promptbox}

\section{Building Cells}\label{sec:appendix:cell}
As described in \autoref{sec:conditional_generation},
generation is conditioned on a set of demographic attributes.
We build these sets from the $V$ attribute variables of each dataset (\autoref{sec:datasets})
in three steps.
(i) We fix the number of conditioning attributes $L$ and enumerate all $\binom{V}{L}$
subsets of the variables. For $L = 2$ and the variables sex, age, and marital status,
the subsets are (sex, age), (sex, marital status), and (age, marital status). 
(ii) For each subset, every combination of attribute values observed in the data defines a \emph{cell},
e.g., (female, 30--39).
We keep a cell only if it contains at least 100 human respondents and none of its values is a missing code; the threshold matches the 100 respondents we generate per cell.
When a cell has more than 100 human respondents, we repeatedly draw 100 of them at random, 
so that \autoref{sec:respondent_coverage} always compares sets of equal size. 
(iii) For each $L$, we sample 100 cells uniformly at random from the retained cells,
or take all of them when fewer than 100 remain. All models are evaluated on the same cells. 
This yields 730 cells for WVS and 603 for ANES. 
Fewer than 100 cells remain at $L = 0$ (one cell), at $L = 1$ (70 and 55), and at the deepest levels
(WVS: 59 at $L = 8$; ANES: 46 at $L = 7$ and one at $L = 8$); every other level contributes 100 cells.\footnote{In both datasets no cell with nine or more attributes contains 100 human respondents.}
In total, this amounts to 13.33 million queries per model (one per respondent-question pair).

All respondents in both datasets reside in the United States
(\autoref{sec:datasets}), so every cell also includes this residence as an
attribute. Each cell $c$ gives one pair $(\mathcal{H}_c, \mathcal{G}_c)$:
$\mathcal{H}_c$ is the set of answer vectors of the human respondents in the
cell, and $\mathcal{G}_c$ is the set of 100 answer vectors generated with
the cell's attribute values in the prompt (\autoref{sec:prompting}). All
metrics in \autoref{sec:respondent_coverage} are computed per cell.

\section{LLM Settings and Computational Cost}\label{sec:appendix:model_parameters}
All queries to the proprietary models were issued between September 20 and October 1, 2026 (UTC).
We leave the sampling parameters (temperature, top\_p, and top\_k where available) at their vendor
defaults and set only the reasoning effort to the lowest level available: \texttt{none} 
for the OpenAI models and \texttt{minimal} for the Google models.

Open-weight models are served with vLLM 0.26.0~\citep{Kwon2023-om} in bfloat16 on eight NVIDIA H200 GPUs.
We send no sampling parameters, so each model runs with the values shipped in its generation config
(or vLLM's defaults when none are shipped), and we set reasoning to the lowest level available
(off for Qwen3.5-9B and gemma-4-12B, \texttt{low} for gpt-oss-20b).

\begin{table*}[t]
\centering
\footnotesize
\setlength{\tabcolsep}{4pt}
\begin{tabular}{llcccccl}
\toprule
Model & Weights / access & Params & Temp. & top-$p$ & top-$k$ & Reasoning & Run dates (2026, UTC) \\
\midrule
gpt-5.4-nano & OpenAI API & -- & 1.0 & 1.0 & -- & effort \texttt{none} & Sep 20 \\
gemini-flash-lite & Google Vertex AI & -- & 1.0 & 0.95$^\dagger$ & 64$^\dagger$ & thinking \texttt{minimal} & Sep 20 \\
Qwen3.5-9B & open weights, vLLM & 9B & 1.0 & 1.0 & -- & thinking off & Sep 20 \\
gemma-4-12B & open weights, vLLM & 12B & 1.0 & 0.95 & 64 & thinking off & Sep 20 -- 21 \\
gpt-oss-20b & open weights, vLLM & 20B & 1.0 & 1.0 & -- & effort \texttt{low} & Sep 20 \\
Llama-3.3-70B & open weights, vLLM & 70B & 0.6 & 0.9 & -- & -- & Sep 21 -- 24 \\
\bottomrule
\end{tabular}
\caption{Decoding settings of the six LLMs in the main experiment. --: not applicable or not disclosed. $\dagger$: assumed (see text).}\label{table:appendix:model_parameters}
\end{table*}

\autoref{table:appendix:model_parameters} lists the six LLMs and the decoding settings that
their defaults resolved to (\autoref{sec:models}).
For gemini-flash-lite, Vertex AI does not expose its defaults,
and we assume the values published for the same model in AI Studio ($\dagger$).
The exact identifiers are \texttt{gemini-3.1-flash-lite} for gemini-flash-lite and,
for the open-weight models, \texttt{Qwen/Qwen3.5-9B}, \texttt{google/gemma-4-12B-it},
\texttt{openai/gpt-oss-20b}, and \texttt{meta-llama/Llama-3.3-70B-Instruct}.
Every query is a single user message without a system prompt, with \texttt{max\_tokens} of 4,096.

\paragraph{Computational cost.}
For the runs reported in this paper,
we spent approximately \$11{,}160 on the proprietary models
(OpenAI \$4{,}140 and Google \$7{,}020), and
approximately 1{,}960 H200 GPU-hours (about 245 hours on eight GPUs) on the open-weight models.

\section{Gower Distance}\label{sec:appendix:gower_distance}
For two answer vectors $\bm{u}$ and $\bm{v}$ over the $M = 100$ questions, the Gower distance is
\begin{equation}
d(\bm{u}, \bm{v}) = \frac{\sum_{m=1}^{M} w_m\, d_m(u_m, v_m)}{\sum_{m=1}^{M} w_m},
\end{equation}
where $d_m$ depends on the type of question $m$.
For numeric and ordinal questions,
$d_m = \lvert u_m - v_m \rvert / R_m$, where $R_m$ is the range of the answer scale of question $m$:
the maximum minus the minimum for rating scales, and the number of choices minus one for
ordinal questions, whose answers are coded by their position in the choice order.
For nominal questions, $d_m = \mathbf{1}[u_m \neq v_m]$.
The weight $w_m$ is 1 if both respondents answered question $m$ and 0 otherwise,
so the distance is an average over the questions that both respondents answered.
We treat as missing the don't-know code offered for rating scales
(99 in WVS and $-1$ in ANES), the don't-know and missing codes of the survey data, and
any value outside the presented choices or scale.

Rating scales are treated as numeric, and multiple-choice questions as ordinal when their
choices have a natural order and as nominal otherwise,
giving 30, 45, and 25 questions of the three types in WVS and 32, 36, and 32 in ANES.
We check the type of every question by hand, reading its wording and choices.
In WVS, we treat the nine agree--disagree questions that offer ``Hard to say'' as nominal,
because ``Hard to say'' has no position on the agreement scale.

\paragraph{Near-empty respondents.}
A respondent who answered very few questions is at distance zero from everyone who
gave the same answers to those questions and can be the nearest neighbor of many respondents.
We therefore use only human respondents with at least 10 valid answers (\autoref{sec:datasets});
this removes 2 WVS and 22 ANES respondents from 6 and 42 cells,
and no cell falls below 100 human respondents.

\section{Ablation Settings}\label{sec:appendix:ablation}
The ablations in \autoref{sec:ablation} share the cell subset described there
(up to five cells per level of $L$; 41 cells for WVS and 37 for ANES).
The six models and the three baselines of the main experiment are not rerun;
their metrics are recomputed on this subset from the original responses.
The values therefore differ slightly from \autoref{table:main}.
Unless stated otherwise, all settings other than the varied one are those of the main experiment (Appendix~\ref{sec:appendix:model_parameters}).

\paragraph{Temperature.}
For the API models we send the temperature and top-$p$ explicitly:
gpt-5.4-nano at $t \in \{0.1, 0.5, 1.0, 1.5\}$ and
gemini-flash-lite at $t \in \{0.1, 1.0, 1.5\}$, both with top-$p = 1.0$.
For gemini-flash-lite we also rerun its default ($t = 1.0$, top-$p = 0.95$) and confirm that
the explicit setting reproduces the main run
(RCG and OSR differ by at most 0.006, about as much as between two runs of the same setting).
For the open-weight models we disable the released generation configurations and use the plain
vLLM sampler (top-$p = 1.0$, no top-$k$) at $t \in \{0.1, 0.5, 1.0, 1.5\}$ for Qwen3.5-9B,
gemma-4-12B, and Llama-3.3-70B, and at $t \in \{0.1, 1.0, 1.5\}$ for gpt-oss-20b.
All temperature runs are therefore untruncated except those of gemini-flash-lite,
which keep its default top-$k$.

\paragraph{Reasoning effort.}
We vary the reasoning effort of gpt-5.4-nano at \texttt{low}, \texttt{medium}, and \texttt{high} (default \texttt{none})
and of gpt-oss-20b at \texttt{medium} and \texttt{high} (default \texttt{low}).
Temperature and top-$p$ stay at their defaults, and \texttt{max\_tokens} is
raised to 16{,}384 for gpt-5.4-nano and 7{,}168 for gpt-oss-20b to leave room for reasoning tokens.

\paragraph{Model tier.}
We add gpt-5.4 and gemini-3.1-pro-preview (gemini-pro).
At \texttt{minimal}, the thinking level we use for both Google models, gemini-pro still uses about 260 thinking tokens
per call on average, whereas gemini-flash-lite uses none.

\section{Sensitivity to Percentile Choices}\label{sec:appendix:sensitivity_of_percentile}
  \begin{table*}[t]
  \centering
  \small
  \setlength{\tabcolsep}{4pt}
  \begin{tabular}{lrrrrrrrrrrrr}
  \toprule
   & \multicolumn{6}{c}{WVS} & \multicolumn{6}{c}{ANES} \\
  \cmidrule(lr){2-7} \cmidrule(lr){8-13}
   & $0.50$ & $0.75$ & $0.90$ & $0.95$ & $0.99$ & $1.00$ & $0.50$ & $0.75$ & $0.90$ & $0.95$ & $0.99$ & $1.00$ \\
  \midrule
  gpt-5.4-nano & 1.325 & 1.312 & 1.290 & 1.272 & 1.256 & 1.247 & 1.535 & 1.463 & 1.404 & 1.370 & 1.317 & 1.282 \\
  gemini-flash-lite & 1.157 & 1.185 & 1.210 & 1.218 & 1.230 & 1.231 & 1.293 & 1.331 & 1.346 & 1.353 & 1.361 & 1.359 \\
  Qwen3.5-9B & 1.311 & 1.307 & 1.290 & 1.273 & 1.255 & 1.237 & 1.510 & 1.436 & 1.377 & 1.347 & 1.296 & 1.259 \\
  gemma-4-12B & 1.144 & 1.167 & 1.188 & 1.199 & 1.210 & 1.204 & 1.370 & 1.356 & 1.342 & 1.332 & 1.321 & 1.306 \\
  gpt-oss-20b & 1.318 & 1.307 & 1.287 & 1.269 & 1.249 & 1.232 & 1.481 & 1.422 & 1.372 & 1.346 & 1.311 & 1.284 \\
  Llama-3.3-70B & 1.205 & 1.247 & 1.267 & 1.275 & 1.290 & 1.287 & 1.275 & 1.278 & 1.277 & 1.277 & 1.281 & 1.277 \\
  \midrule
  Marginals & 1.229 & 1.194 & 1.168 & 1.158 & 1.150 & 1.140 & 1.353 & 1.294 & 1.261 & 1.250 & 1.234 & 1.219 \\
  Copula & 1.082 & 1.061 & 1.046 & 1.041 & 1.034 & 1.036 & 1.099 & 1.091 & 1.089 & 1.091 & 1.098 & 1.102 \\
  LCA & 1.145 & 1.124 & 1.109 & 1.105 & 1.103 & 1.108 & 1.156 & 1.145 & 1.142 & 1.143 & 1.151 & 1.154 \\
  \bottomrule
  \end{tabular}
  \caption{RCGs at various percentiles $q$, $\tfrac{d^{q}_{\mathcal{H}\rightarrow\mathcal{G}}}{ d^{q}_{\mathcal{H}\rightarrow\mathcal{H}}}$.}
  \label{table:quantile_sensitivity_rcg}
  \end{table*}

  \begin{table*}[t]
  \centering
  \small
  \setlength{\tabcolsep}{4pt}
  \begin{tabular}{lrrrrrrrrrrrr}
  \toprule
   & \multicolumn{6}{c}{WVS} & \multicolumn{6}{c}{ANES} \\
  \cmidrule(lr){2-7} \cmidrule(lr){8-13}
   & $0.50$ & $0.75$ & $0.90$ & $0.95$ & $0.99$ & $1.00$ & $0.50$ & $0.75$ & $0.90$ & $0.95$ & $0.99$ & $1.00$ \\
  \midrule
  gpt-5.4-nano & 1.185 & 1.117 & 1.043 & 0.994 & 0.920 & 0.873 & 1.415 & 1.313 & 1.224 & 1.173 & 1.094 & 1.046 \\
  gemini-flash-lite & 0.929 & 0.898 & 0.867 & 0.840 & 0.796 & 0.764 & 0.966 & 0.912 & 0.865 & 0.838 & 0.794 & 0.766 \\
  Qwen3.5-9B & 1.171 & 1.106 & 1.038 & 0.992 & 0.921 & 0.877 & 1.442 & 1.347 & 1.262 & 1.213 & 1.135 & 1.090 \\
  gemma-4-12B & 0.937 & 0.884 & 0.829 & 0.793 & 0.738 & 0.704 & 1.123 & 1.047 & 0.980 & 0.942 & 0.884 & 0.848 \\
  gpt-oss-20b & 1.181 & 1.122 & 1.057 & 1.013 & 0.949 & 0.909 & 1.369 & 1.278 & 1.194 & 1.144 & 1.066 & 1.016 \\
  Llama-3.3-70B & 0.865 & 0.810 & 0.756 & 0.722 & 0.672 & 0.642 & 1.023 & 0.951 & 0.889 & 0.853 & 0.801 & 0.770 \\
  \midrule
  Marginals & 1.192 & 1.118 & 1.040 & 0.989 & 0.911 & 0.862 & 1.307 & 1.213 & 1.127 & 1.078 & 1.002 & 0.956 \\
  Copula & 1.079 & 1.034 & 0.982 & 0.946 & 0.887 & 0.850 & 1.072 & 1.047 & 1.005 & 0.976 & 0.926 & 0.893 \\
  LCA & 1.112 & 1.066 & 1.003 & 0.959 & 0.889 & 0.844 & 1.074 & 1.037 & 0.986 & 0.950 & 0.892 & 0.854 \\
  \bottomrule
  \end{tabular}
  \caption{OSRs at various percentiles $q$, $\tfrac{d^{q}_{\mathcal{G}\rightarrow\mathcal{H}}}{ d^{q}_{\mathcal{H}\rightarrow\mathcal{H}}}$.}
  \label{table:quantile_sensitivity_osr}
  \end{table*}

All six LLMs have cell-averaged RCG above one at every tested percentile.
We write the percentile in RCG and OSR as a fraction $q$; the main text uses $q = 0.95$, the 95th percentile.
The choice of $q$ determines which part of the nearest-neighbor distance distribution is emphasized.
At $q = 1.00$, the metric uses the maximum distance;
at $q=0.95$, it uses the 95th percentile and is less sensitive to the largest distances.

\autoref{table:quantile_sensitivity_rcg} and \autoref{table:quantile_sensitivity_osr} show RCG and OSR
for $q$ from 0.50 to 1.00.
Our main result, that every LLM has a higher RCG than Marginals (\autoref{table:main}),
holds in both datasets at $q = 0.90$, $0.95$, $0.99$, and $1.00$.
At $q \leq 0.75$, only gemini-flash-lite, gemma-4-12B, and Llama-3.3-70B have lower RCG than Marginals,
in three model--dataset pairs at $q = 0.75$ and five at $q = 0.50$.
These three models show the contraction pattern (\autoref{sec:failure_modes}):
their generated respondents stay close to some human respondents while leaving others poorly covered.
A small $q$ ignores the human respondents farthest from the generated ones,
which for these models are the poorly covered ones, so their RCG approaches
or falls below that of Marginals.
OSR decreases with $q$ for every generator, but the two groups of LLMs in \autoref{sec:failure_modes} do not overlap at any $q$.

\section{Baseline Fitting}\label{sec:appendix:hold_out}
\begin{table}[t]
  \centering
  \small
  \setlength{\tabcolsep}{3pt}
  \begin{tabular}{@{}lrrrr@{}}
  \toprule
   & \multicolumn{2}{c}{WVS} & \multicolumn{2}{c@{}}{ANES} \\
  \cmidrule(lr){2-3} \cmidrule(lr){4-5}
   & In-cell & Hold-out & In-cell & Hold-out \\
  \midrule
  gpt-5.4-nano & \multicolumn{2}{c}{1.261} & \multicolumn{2}{c@{}}{1.354} \\
  gemini-flash-lite & \multicolumn{2}{c}{1.221} & \multicolumn{2}{c@{}}{1.345} \\
  Qwen3.5-9B & \multicolumn{2}{c}{1.260} & \multicolumn{2}{c@{}}{1.326} \\
  gemma-4-12B & \multicolumn{2}{c}{1.186} & \multicolumn{2}{c@{}}{1.310} \\
  gpt-oss-20b & \multicolumn{2}{c}{1.255} & \multicolumn{2}{c@{}}{1.317} \\
  Llama-3.3-70B & \multicolumn{2}{c}{1.264} & \multicolumn{2}{c@{}}{1.273} \\
  \midrule
  Marginals & 1.156 & 1.165 & 1.263 & 1.272 \\
  Copula & 1.047 & 1.088 & 1.120 & 1.109 \\
  LCA & 1.108 & 1.128 & 1.168 & 1.151 \\
  \bottomrule
  \end{tabular}
  \caption{RCG under the In-cell and Hold-out settings. The LLMs are not fitted, so each LLM has one value for both settings.}
  \label{table:holdout}
  \end{table}

Copula transforms the answers to each question into normal scores using the ranks of their
codes (tied values are assigned average ranks), estimates the correlation matrix of these scores
with Ledoit--Wolf shrinkage, and maps sampled scores back to codes through
the standard normal cumulative distribution function
and the empirical quantile function of each question.
Nominal questions go through the same steps, so their codes are ranked in numerical order
although this order has no meaning.
For these questions, Copula preserves the fitted marginal answer distributions,
but the modeled dependence on other questions depends on this arbitrary ordering.
LCA models every question as categorical and does not use the order of the codes.

Copula and LCA are fitted to the respondents who answer every question.
In 147 of the 603 ANES cells, and in no WVS cell, fewer than 50 respondents do so;
there we fill each missing answer with the median of the codes observed for its question
in the cell (a placeholder with no meaning for nominal questions) and fit to all respondents.

In \autoref{sec:main_results}, we fit the baselines to the human respondents of each cell,
from which the 100 evaluated respondents are also drawn.
To check whether the comparison between the LLMs and the baselines depends on this,
we compare two settings on the cells with at least 200 human respondents
(203 of the 730 cells for WVS and 196 of the 603 cells for ANES):
\emph{In-cell} fits the baselines as in \autoref{sec:main_results};
\emph{Hold-out} fits them, in each of the 50 draws, to 100 human respondents
drawn from $\mathcal{H}_c$ without overlap with the 100 evaluation respondents.
Both settings use the same draws of the evaluated respondents.

\autoref{table:holdout} shows the results.
Compared with In-cell, Hold-out raises the RCG of Marginals by 0.008 on WVS and 0.009 on ANES,
and that of Copula and LCA on WVS by 0.040 and 0.020.
On ANES, Copula and LCA have lower RCG under Hold-out than under In-cell because 84\% of
their 100-respondent fitting samples require the median-imputation procedure described above.
This procedure allows the models to be fitted to all 100 respondents rather than only
those who answer every question.
Under Hold-out, every LLM still has a higher RCG than every baseline,
except that on ANES Llama-3.3-70B and Marginals have almost the same RCG (1.273 and 1.272).
Under In-cell, the RCG of Llama-3.3-70B is 0.010 above that of Marginals on the 196 ANES cells used here,
and 0.028 above it on all 603 ANES cells (\autoref{table:appendix:main_ci}).

\section{95\% Confidence Intervals for the Main Results}\label{sec:appendix:main_95}
  \begin{table}[t]
  \centering
  \small
  \setlength{\tabcolsep}{4pt}
  \begin{tabular}{@{}lcc@{}}
  \toprule
   & \multicolumn{2}{c}{RCG $-$ Marginals} \\
  \cmidrule(lr){2-3}
   & WVS & ANES \\
  \midrule
  gpt-5.4-nano & $+0.114$ & $+0.120$ \\[-2pt]
   & {\scriptsize $[+0.111, +0.117]$} & {\scriptsize $[+0.110, +0.130]$} \\
  gemini-flash-lite & $+0.060$ & $+0.103$ \\[-2pt]
   & {\scriptsize $[+0.055, +0.065]$} & {\scriptsize $[+0.088, +0.120]$} \\
  Qwen3.5-9B & $+0.115$ & $+0.097$ \\[-2pt]
   & {\scriptsize $[+0.112, +0.119]$} & {\scriptsize $[+0.087, +0.108]$} \\
  gemma-4-12B & $+0.041$ & $+0.083$ \\[-2pt]
   & {\scriptsize $[+0.038, +0.045]$} & {\scriptsize $[+0.067, +0.100]$} \\
  gpt-oss-20b & $+0.111$ & $+0.096$ \\[-2pt]
   & {\scriptsize $[+0.107, +0.115]$} & {\scriptsize $[+0.082, +0.111]$} \\
  Llama-3.3-70B & $+0.116$ & $+0.028$ \\[-2pt]
   & {\scriptsize $[+0.112, +0.120]$} & {\scriptsize $[+0.017, +0.039]$} \\
  \midrule
  Copula & $-0.118$ & $-0.159$ \\[-2pt]
   & {\scriptsize $[-0.121, -0.115]$} & {\scriptsize $[-0.172, -0.147]$} \\
  LCA & $-0.053$ & $-0.106$ \\[-2pt]
   & {\scriptsize $[-0.056, -0.051]$} & {\scriptsize $[-0.118, -0.095]$} \\
  \bottomrule
  \end{tabular}
\caption{Cell-wise differences of the uncalibrated RCG (RCG  in \autoref{table:main}) from Marginals, averaged over cells. Brackets show the 95\% CI from 10,000 bootstrap resamples of the sets of conditioning attributes, each with all of its cells. Every uncalibrated LLM is significantly
above Marginals on both datasets.}
  \label{table:appendix:main_ci}
  \end{table}

\begin{table*}[t]
\centering
\small
\setlength{\tabcolsep}{3pt}
\begin{tabular}{@{}lccc@{\hspace{12pt}}ccc@{}}
\toprule
 & \multicolumn{3}{c}{WVS} & \multicolumn{3}{c}{ANES} \\
\cmidrule(lr){2-4}\cmidrule(l){5-7}
 & $-$ Marginals & $-$ Copula & $-$ LCA
 & $-$ Marginals & $-$ Copula & $-$ LCA \\
\midrule
gpt-5.4-nano & $-0.008$ & $+0.110$ & $+0.045$ & $-0.018$ & $+0.141$ & $+0.089$ \\[-2pt]
 & {\scriptsize $[-0.009, -0.007]$} & {\scriptsize $[+0.107, +0.113]$} & {\scriptsize $[+0.043, +0.048]$}
 & {\scriptsize $[-0.020, -0.016]$} & {\scriptsize $[+0.130, +0.153]$} & {\scriptsize $[+0.078, +0.099]$} \\
gemini-flash-lite & $-0.032$ & $+0.085$ & $+0.021$ & $-0.072$ & $+0.087$ & $+0.035$ \\[-2pt]
 & {\scriptsize $[-0.034, -0.031]$} & {\scriptsize $[+0.083, +0.088]$} & {\scriptsize $[+0.019, +0.023]$}
 & {\scriptsize $[-0.078, -0.065]$} & {\scriptsize $[+0.080, +0.095]$} & {\scriptsize $[+0.028, +0.042]$} \\
Qwen3.5-9B & $-0.014$ & $+0.104$ & $+0.040$ & $-0.032$ & $+0.127$ & $+0.074$ \\[-2pt]
 & {\scriptsize $[-0.015, -0.012]$} & {\scriptsize $[+0.101, +0.107]$} & {\scriptsize $[+0.037, +0.042]$}
 & {\scriptsize $[-0.035, -0.030]$} & {\scriptsize $[+0.116, +0.138]$} & {\scriptsize $[+0.064, +0.084]$} \\
gemma-4-12B & $-0.021$ & $+0.096$ & $+0.032$ & $-0.055$ & $+0.104$ & $+0.051$ \\[-2pt]
 & {\scriptsize $[-0.023, -0.020]$} & {\scriptsize $[+0.094, +0.099]$} & {\scriptsize $[+0.030, +0.034]$}
 & {\scriptsize $[-0.060, -0.050]$} & {\scriptsize $[+0.096, +0.112]$} & {\scriptsize $[+0.044, +0.059]$} \\
gpt-oss-20b & $-0.019$ & $+0.099$ & $+0.035$ & $-0.047$ & $+0.112$ & $+0.059$ \\[-2pt]
 & {\scriptsize $[-0.020, -0.017]$} & {\scriptsize $[+0.096, +0.102]$} & {\scriptsize $[+0.032, +0.037]$}
 & {\scriptsize $[-0.052, -0.043]$} & {\scriptsize $[+0.103, +0.121]$} & {\scriptsize $[+0.051, +0.067]$} \\
Llama-3.3-70B & $-0.011$ & $+0.106$ & $+0.042$ & $-0.052$ & $+0.107$ & $+0.055$ \\[-2pt]
 & {\scriptsize $[-0.013, -0.010]$} & {\scriptsize $[+0.104, +0.109]$} & {\scriptsize $[+0.040, +0.044]$}
 & {\scriptsize $[-0.057, -0.047]$} & {\scriptsize $[+0.099, +0.116]$} & {\scriptsize $[+0.047, +0.062]$} \\
\bottomrule
\end{tabular}
\caption{Cell-wise differences of the calibrated RCG (Cal.\ in \autoref{table:main}) from each baseline, averaged over cells; e.g., ``$-$ Marginals'' is Cal.\ $-$ Marginals. Brackets show the 95\% CI from 10,000 bootstrap resamples of the sets of conditioning attributes, each with all of its cells, as in Table~\ref{table:appendix:main_ci}. Every calibrated LLM is significantly below Marginals and significantly above Copula and LCA on both datasets.}
\label{table:appendix:calibration_diff_ci}
\end{table*}

\autoref{table:appendix:main_ci} and \autoref{table:appendix:calibration_diff_ci} show
the 95\% confidence intervals of \autoref{sec:main_results}.

\section{RCG by \texorpdfstring{$L$}{L}}\label{sec:appendix:rcg}
\begin{figure}[t]
  \centering
  \includegraphics[width=\columnwidth]{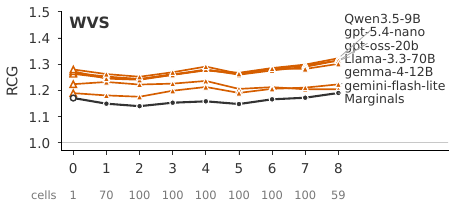}
  \caption{
  RCG on WVS by the number of conditioning attributes $L$.
  Each point is the mean over the cells at that $L$
  (cell counts below the axis; hollow markers denote a single cell).
  Triangles are LLMs and circles are Marginals;
  the gray horizontal line marks RCG $= 1$.}
  \label{fig:rcg_by_x_wvs}
\end{figure}

\begin{figure}[t]
  \centering
  \includegraphics[width=\columnwidth]{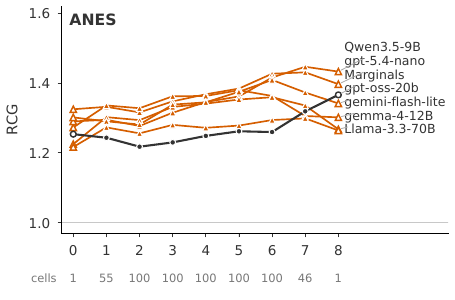}
  \caption{
  RCG on ANES by the number of conditioning attributes $L$.
  Each point is the mean over the cells at that $L$
  (cell counts below the axis; hollow markers denote a single cell).
  Triangles are LLMs and circles are Marginals;
  the gray horizontal line marks RCG $= 1$.}
  \label{fig:rcg_by_x_anes}
\end{figure}

\autoref{fig:rcg_by_x_wvs} and \autoref{fig:rcg_by_x_anes} show RCG on WVS and ANES 
by the number of conditioning attributes $L$.
At every value of $L$ on WVS, all six LLMs have a higher RCG than Marginals, and all
series stay above 1.
On WVS, cells at different values of $L$ represent different subpopulations,
whose median size decreases from 2,594 respondents at $L = 0$ to 114 at $L = 8$.
We therefore do not interpret the increase in RCG with $L$ as the
effect of adding an attribute.
Overall, on WVS the gap to Marginals is present at every conditioning depth we test, from the
unconditioned population ($L = 0$) to eight attributes; on ANES, it is present
from one to six attributes.

\section{Prior Metrics and RCG}\label{sec:appendix:prior_metrics}
  \begin{table}[t]
  \centering
  \small
  \setlength{\tabcolsep}{2.5pt}
  \begin{tabular}{@{}lrrrrrr@{}}
  \toprule
   & \multicolumn{3}{c}{WVS} & \multicolumn{3}{c}{ANES} \\
  \cmidrule(lr){2-4} \cmidrule(lr){5-7}
   & JSD & $1-\rho$ & RCG & JSD & $1-\rho$ & RCG \\
  \midrule
  gpt-5.4-nano & 0.403 & 0.392 & 1.272 & 0.409 & 0.494 & 1.370 \\
  gemini-flash-lite & 0.413 & 0.253 & 1.218 & 0.427 & 0.118 & 1.353 \\
  Qwen3.5-9B & 0.314 & 0.452 & 1.273 & 0.343 & 0.433 & 1.347 \\
  gemma-4-12B & 0.393 & 0.315 & 1.199 & 0.420 & 0.391 & 1.332 \\
  gpt-oss-20b & 0.358 & 0.348 & 1.269 & 0.372 & 0.503 & 1.346 \\
  Llama-3.3-70B & 0.452 & 0.329 & 1.275 & 0.448 & 0.267 & 1.277 \\
  \midrule
  Marginals & 0.091 & 0.006 & 1.158 & 0.097 & 0.001 & 1.250 \\
  Copula & 0.100 & 0.018 & 1.041 & 0.124 & 0.005 & 1.091 \\
  LCA & 0.100 & 0.018 & 1.105 & 0.125 & 0.006 & 1.143 \\
  \midrule
  Split-half ceiling & \multicolumn{1}{c}{--} & 0.005 & \multicolumn{1}{c}{--} & \multicolumn{1}{c}{--} & 0.001 & \multicolumn{1}{c}{--} \\
  Permutation null & \multicolumn{1}{c}{--} & 1.004 & \multicolumn{1}{c}{--} & \multicolumn{1}{c}{--} & 1.000 & \multicolumn{1}{c}{--} \\
  \bottomrule
  \end{tabular}
  \caption{JSD is the per-question Jensen--Shannon distance (base 2) averaged over questions and cells; $1-\rho$ is the correlation-structure mismatch,
  where $\rho$ is the correlation-structure score of \citet{Williams2026-bs}; RCG is as in Table~\ref{table:main}. Larger values are worse in every
  column. The last two rows give $1-\rho$ for the split-half ceiling of the human data and for the permutation null.}
  \label{table:appendix:prior_metrics}
  \end{table}

\autoref{table:appendix:prior_metrics} lists JSD, $1-\rho$, and RCG for all generators on both datasets.

\begin{figure}[p]
  \centering
  \includegraphics[width=\columnwidth]{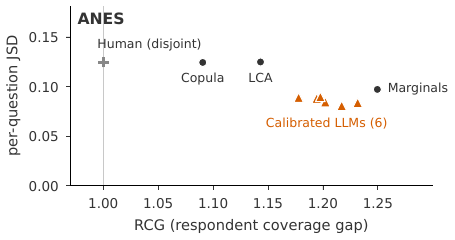}
  \caption{
Per-question JSD (mean over questions and cells) against RCG on ANES 
for three baselines and six LLMs.
For the LLMs, both metrics are computed after in-cell marginal calibration.
Human (disjoint) uses only the 196 cells with at least 200 human respondents.
Larger values are worse on both axes.
}
  \label{fig:jsd_rcg_anes}
\end{figure}

\autoref{fig:jsd_rcg_anes} shows mean per-question JSD against RCG for the baselines and
calibrated LLMs on ANES.

\begin{figure}[p]
  \centering
  \includegraphics[width=\columnwidth]{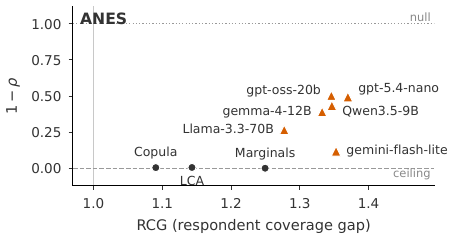}
  \caption{
  Correlation-structure mismatch ($1-\rho$) against RCG for all nine generators on ANES.
  Larger values are worse on both axes.
  Triangles are LLMs; circles are the statistical baselines.
  The dotted horizontal line marks the permutation null (each question column
  permuted across cells, preserving marginals but destroying correlations).
  The dashed horizontal line marks the split-half ceiling of the human data (\autoref{fn:ceiling}).
}
  \label{fig:williams_rcg_anes}
\end{figure}

\autoref{fig:williams_rcg_anes} shows correlation-structure mismatch ($1-\rho$) against RCG
for all nine generators on ANES.

\section{Proof of Proposition 1}\label{sec:appendix:proof_prop1}
The correlation-structure score of \citet{Williams2026-bs}
starts from a set of matrices $\{X_c\}_{c \in \mathcal{C}}$ with $X_c \in \mathbb{R}^{n_c \times M}$,
where $\mathcal{C}$ is the set of cells;
we denote by $X_{c,i,m}$ the answer of respondent $i$ in cell $c$ to $q_m$,
$i \in \mathcal{I}_c,\ |\mathcal{I}_c| = n_c$.
Let $X^{\mathcal{H}}_c$ be the matrix whose rows are the answer vectors in $\mathcal{H}_c$,
and $\mathcal{X}^{\mathcal{H}} = \{X^{\mathcal{H}}_c\}_{c \in \mathcal{C}}$;
we define $\mathcal{X}^{\mathcal{G}}$ from $\{\mathcal{G}_c\}_c$ in the same way,
and likewise $\mathcal{X}^{\mathcal{G}'}$ and $\mathcal{X}^{\mathcal{H}'}$ below.\footnote{
We use the cells as the subgroups of \citet{Williams2026-bs}.
}

The score of \citet{Williams2026-bs} is computed in three steps.
(i) For each cell $c$ and each of the $M_\mathrm{o}$ rating-scale or ordinal questions $q_m$,
let $A_{c,m}(\mathcal{X})$ be the mean answer to $q_m$ among the respondents of cell $c$ who answered it,
so that $A(\mathcal{X}) \in \mathbb{R}^{|\mathcal{C}| \times M_\mathrm{o}}$.\footnote{
Nominal questions are excluded because their answers have no mean.
}
(ii) Let $R(\mathcal{X}) \in \mathbb{R}^{M_\mathrm{o} \times M_\mathrm{o}}$ be the matrix of
Pearson correlations between the columns of $A(\mathcal{X})$, that is, between questions across cells,
and let $\mathbf{s}(\mathcal{X}) \in \mathbb{R}^{\frac{M_\mathrm{o}(M_\mathrm{o}-1)}{2}}$ be
the vector of its strictly upper-triangular entries.
(iii) The score is the Pearson correlation
$\rho = \operatorname{corr}\bigl(\mathbf{s}(\mathcal{X}^{\mathcal{H}}), \mathbf{s}(\mathcal{X}^{\mathcal{G}})\bigr)$.

\begin{proof}
In each cell $c$, the permutation only reorders the answers to $q_m$ among the respondents
in $\mathcal{G}_c$ who answered $q_m$, so the within-cell marginal of $q_m$ is unchanged,
which is the first claim.
In particular, the mean answer to each of the $M_\mathrm{o}$ questions is unchanged,
so $A(\mathcal{X}^{\mathcal{G}'}) = A(\mathcal{X}^{\mathcal{G}})$.
Since $R$, $\mathbf{s}$, and $\rho$ are computed from $A$ alone,
$\rho$ takes the same value on $\{\mathcal{G}_c\}_c$ and $\{\mathcal{G}'_c\}_c$.
\end{proof}

The same argument applied to the human respondents
gives $A(\mathcal{X}^{\mathcal{H}'}) = A(\mathcal{X}^{\mathcal{H}})$,
so the score of $\{\mathcal{H}'_c\}_c$ against $\{\mathcal{H}_c\}_c$ is the correlation of
$\mathbf{s}(\mathcal{X}^{\mathcal{H}})$ with itself, which is one.

\section{Number of Generated Respondents}\label{sec:appendix:100x}
\begin{table}[t]
  \centering
  \small
  \setlength{\tabcolsep}{4pt}
  \begin{tabular}{lrrrr}
  \toprule
   & \multicolumn{3}{c}{RCG at $|\mathcal{G}_c| =$} & Slope \\
  \cmidrule(lr){2-4}
   & 100 & 1{,}000 & 10{,}000 & per decade \\
  \midrule
  gpt-5.4-nano & 1.391 & 1.298 & -- & $-0.093^{\dagger}$ \\
  gemini-flash-lite & 1.359 & 1.259 & -- & $-0.097^{\dagger}$ \\
  Qwen3.5-9B & 1.357 & 1.239 & 1.146 & $-0.104$ \\
  gemma-4-12B & 1.338 & 1.242 & 1.167 & $-0.083$ \\
  gpt-oss-20b & 1.345 & 1.240 & 1.159 & $-0.092$ \\
  \midrule
  Marginals & 1.271 & 1.166 & 1.080 & $-0.094$ \\
  Copula & 1.086 & 0.964 & 0.874 & $-0.103$ \\
  LCA & 1.143 & 1.043 & 0.965 & $-0.089$ \\
  \bottomrule
  \end{tabular}
  \caption{
    RCG (cell mean over the ANES subset of Section~\ref{sec:ablation})
  at three sizes of the generated set $|\mathcal{G}_c|$.
  Slope is the median over cells of the per-cell change in RCG per tenfold
  increase in $|\mathcal{G}_c|$.
  --: not generated; $\dagger$: fitted over one decade only, for reference.
  }
  \label{table:ngen_rcg_anes}
  \end{table}

 \begin{table*}[t]
  \centering
  \small
  \setlength{\tabcolsep}{4pt}
  \begin{tabular}{lrrrrrrrr}
  \toprule
   & \multicolumn{4}{c}{WVS} & \multicolumn{4}{c}{ANES} \\
  \cmidrule(lr){2-5} \cmidrule(lr){6-9}
   & \multicolumn{3}{c}{OSR at $|\mathcal{G}_c| =$} & Slope & \multicolumn{3}{c}{OSR at $|\mathcal{G}_c| =$} & Slope \\
  \cmidrule(lr){2-4} \cmidrule(lr){6-8}
   & 100 & 1{,}000 & 10{,}000 & per decade & 100 & 1{,}000 & 10{,}000 & per decade \\
  \midrule
  gpt-5.4-nano & 0.988 & 0.992 & -- & $0.003^{\dagger}$ & 1.193 & 1.197 & -- & $0.005^{\dagger}$ \\
  gemini-flash-lite & 0.828 & 0.832 & -- & $0.004^{\dagger}$ & 0.870 & 0.875 & -- & $0.004^{\dagger}$ \\
  Qwen3.5-9B & 0.981 & 0.986 & 0.986 & $0.002$ & 1.232 & 1.237 & 1.238 & $0.002$ \\
  gemma-4-12B & 0.783 & 0.787 & 0.787 & $0.002$ & 0.967 & 0.972 & 0.972 & $0.002$ \\
  gpt-oss-20b & 1.008 & 1.013 & 1.013 & $0.002$ & 1.163 & 1.168 & 1.169 & $0.002$ \\
  \midrule
  Marginals & 0.990 & 0.993 & 0.993 & $0.002$ & 1.106 & 1.109 & 1.109 & $0.002$ \\
  Copula & 0.946 & 0.950 & 0.951 & $0.002$ & 0.999 & 1.005 & 1.006 & $0.003$ \\
  LCA & 0.963 & 0.967 & 0.967 & $0.002$ & 0.961 & 0.966 & 0.966 & $0.002$ \\
  \bottomrule
  \end{tabular}
  \caption{
  OSR (cell mean over the subsets of Appendix~\ref{sec:appendix:ablation}) at three sizes of the generated set $|\mathcal{G}_c|$.
  Slope is the median over cells of the per-cell slope of OSR regressed on $\log_{10} |\mathcal{G}_c|$,
  i.e., the change per tenfold increase. The two proprietary models were generated only up to
  $|\mathcal{G}_c| = 1{,}000$ (--: not generated); their slopes ($\dagger$) are fitted over one decade
  instead of two and are reported for reference.
  }
  \label{table:ngen_osr}
  \end{table*}

This section reports the ANES results and the OSR for \autoref{sec:ngen}.
At each $|\mathcal{G}_c|$, the generated set is drawn at random from
1,000 (proprietary models) or 10,000 (others) respondents per cell,
generated separately from the main experiment, so the values at
$|\mathcal{G}_c| = 100$ in \autoref{table:ngen_rcg_wvs}, \autoref{table:ngen_rcg_anes},
and \autoref{table:ngen_osr} differ from those of the main experiment on the
same subset (Appendix~\ref{sec:appendix:ablation}) by less than 0.01.

\autoref{table:ngen_rcg_anes} is the ANES counterpart of \autoref{table:ngen_rcg_wvs}.
RCG starts higher on ANES and falls faster with $|\mathcal{G}_c|$ for every generator.
Still, as on WVS, no LLM reaches one within the sizes we generate.
We estimate how many generated respondents per cell are needed
for RCG to reach one by extrapolating the slopes in \autoref{table:ngen_rcg_anes} linearly in
$\log_{10}|\mathcal{G}_c|$ from $|\mathcal{G}_c| = 10{,}000$.
The estimates are about $2.5 \times 10^5$ to $1.0 \times 10^6$ generated respondents per cell for the three open-weight models, that is, about 2{,}500 to 10{,}000 times the 100 human respondents.
As on WVS, RCG falls less in the second decade than in the first, so these estimates are likely underestimates.

\autoref{table:ngen_osr} shows OSR at the same sizes on both datasets.
From $|\mathcal{G}_c| = 100$ to the largest size we generate, OSR rises by 0.003 to 0.007 for every generator.

\section{Full-History Ablation}\label{sec:appendix:full_history}
 \begin{table*}[t]
  \centering
  \small
  \setlength{\tabcolsep}{4pt}
  \begin{tabular}{@{}llrrrrrr@{}}
  \toprule
   & & \multicolumn{3}{c}{WVS} & \multicolumn{3}{c}{ANES} \\
  \cmidrule(lr){3-5} \cmidrule(lr){6-8}
   & & RCG & RCG $-$ Marginals & OSR & RCG & RCG $-$ Marginals & OSR \\
  \midrule
  gpt-5.4-nano & limited history $(\leq 3)$ & 1.270 & $+0.109$ {\scriptsize $[+0.097, +0.121]$} & 0.990 & 1.390 & $+0.125$ {\scriptsize $[+0.087, +0.165]$} & 1.189 \\
   & full history & 1.231 & $+0.070$ {\scriptsize $[+0.056, +0.083]$} & 1.124 & 1.328 & $+0.062$ {\scriptsize $[+0.015, +0.113]$} & 1.304 \\
  \addlinespace[2pt]
  gemini-flash-lite & limited history $(\leq 3)$ & 1.218 & $+0.057$ {\scriptsize $[+0.038, +0.076]$} & 0.829 & 1.356 & $+0.090$ {\scriptsize $[+0.039, +0.141]$} & 0.878 \\
   & full history & 1.184\rlap{$^{\dagger}$} & $+0.022$ {\scriptsize $[-0.002, +0.046]$} & 0.901 & 1.249\rlap{$^{\dagger}$} & $-0.017$ {\scriptsize
  $[-0.078, +0.048]$} & 0.820 \\
  \addlinespace[2pt]
  Qwen3.5-9B & limited history $(\leq 3)$ & 1.268 & $+0.107$ {\scriptsize $[+0.094, +0.120]$} & 0.981 & 1.354 & $+0.089$ {\scriptsize $[+0.053, +0.129]$} & 1.234 \\
   & full history & 1.258 & $+0.097$ {\scriptsize $[+0.082, +0.111]$} & 1.097 & 1.276\rlap{$^{\dagger}$} & $+0.010$ {\scriptsize $[-0.036, +0.061]$} &
  1.326 \\
  \addlinespace[2pt]
  gemma-4-12B & limited history $(\leq 3)$ & 1.205 & $+0.044$ {\scriptsize $[+0.031, +0.058]$} & 0.783 & 1.332 & $+0.066$ {\scriptsize $[+0.014, +0.124]$} & 0.968 \\
   & full history & 1.113\rlap{$^{\ddagger}$} & $-0.049$ {\scriptsize $[-0.063, -0.035]$} & 0.837 & 1.224\rlap{$^{\dagger}$} & $-0.042$ {\scriptsize
  $[-0.103, +0.025]$} & 1.033 \\
  \addlinespace[2pt]
  gpt-oss-20b & limited history $(\leq 3)$ & 1.268 & $+0.107$ {\scriptsize $[+0.093, +0.121]$} & 1.003 & 1.338 & $+0.072$ {\scriptsize $[+0.027, +0.123]$} & 1.163 \\
   & full history & 1.193 & $+0.031$ {\scriptsize $[+0.017, +0.046]$} & 1.054 & 1.276\rlap{$^{\dagger}$} & $+0.010$ {\scriptsize $[-0.039, +0.063]$} &
  1.251 \\
  \addlinespace[2pt]
  Llama-3.3-70B & limited history $(\leq 3)$ & 1.279 & $+0.118$ {\scriptsize $[+0.102, +0.135]$} & 0.709 & 1.274\rlap{$^{\dagger}$} & $+0.008$ {\scriptsize $[-0.031,
  +0.048]$} & 0.877 \\
   & full history & 1.243 & $+0.082$ {\scriptsize $[+0.063, +0.101]$} & 0.724 & 1.156\rlap{$^{\ddagger}$} & $-0.110$ {\scriptsize $[-0.160, -0.056]$} &
  0.938 \\
  \midrule
  Marginals & & 1.161 &  & 0.991 & 1.266 &  & 1.108 \\
  Copula & & 1.040 &  & 0.947 & 1.086 &  & 0.991 \\
  LCA & & 1.111 &  & 0.964 & 1.149 &  & 0.964 \\
  \bottomrule
  \end{tabular}
 \caption{RCG $-$ Marginals is the cell-wise difference in RCG from Marginals, averaged over cells.
  The brackets show its 95\% CI from 10{,}000 bootstrap resamples of the sets of conditioning attributes, each with all of its cells.
  The CI thus covers variation from the choice of attribute sets, with the human and generated responses fixed.
  $^{\dagger}$ indicates that the 95\% CI of RCG $-$ Marginals includes zero, 
  whereas $^{\ddagger}$ indicates that the interval lies entirely below zero.
  With the full history, all RCG values remain above those of Copula.
  }
  \label{table:full_history_ci}
  \end{table*}

Providing the full response history lowers mean RCG in all twelve model–dataset pairs, 
although every mean RCG remains above one.

We check how the amount of response history included in the prompt affects RCG and OSR.
We use the full history as few-shot examples (\autoref{table:full_history_ci}).
$^{\dagger}$ indicates that the 95\% CI of RCG $-$ Marginals includes zero, 
whereas $^{\ddagger}$ indicates that the interval lies entirely below zero.
All full-history RCG values are above those of Copula;
the RCGs of four of six LLMs for WVS and one of six
for ANES are significantly above Marginals.
Only gemma-4-12B on WVS and Llama-3.3-70B on ANES have RCG values significantly below those of Marginals.

\section{Calibration Settings}\label{sec:appendix:calibration}
Let $a$ and $b$ denote answer choices for question $q_m$ and $p^{\mathcal{H}}_{c, m}$ be the
distribution of answers to $q_m$ among respondents in $\mathcal{H}_c$
(and $p^{\mathcal{G}}_{c,m}$ likewise for $\mathcal{G}_c$).
We consider a transition matrix $T_{c,m}(b \mid a)$ that satisfies
$\sum_a T_{c,m}(b \mid a)p^{\mathcal{G}}_{c, m}(a) = p^{\mathcal{H}}_{c, m}(b)$ for all $b$
and $\sum_b T_{c,m}(b \mid a) = 1$ for all $a$.
We replace each answer $a$ of a generated respondent with a draw from $T_{c,m}(\cdot \mid a)$,
independently across questions and respondents, and leave missing answers as they are.

The constraint leaves $T_{c,m}$ underdetermined.
For ordered questions (rating scales and ordinal questions), we use quantile
mapping~\citep{Gudmundsson2012-ik,Cannon2015-ba}, which is an optimal transport plan
for the cost $\lvert a-b \rvert$~\citep{Santambrogio2015-sd}.
With choices $\{1,2,3\}$, $p^{\mathcal{G}} = (0.5, 0.5, 0)$, and $p^{\mathcal{H}} = (0.25, 0.25, 0.5)$,
it moves half of the answers at 1 to 2 and all answers at 2 to 3.
For nominal questions, where $\lvert a-b \rvert$ has no meaning, 
we use minimum-rewrite remapping,
an optimal transport plan for the cost $\mathbf{1}[a \neq b]$~\citep{Levin2017-mc}:
it preserves a probability mass of $\min(p^G_{c,m}(a),p^H_{c,m}(a))$ at each choice $a$ and
reallocates the excess to choices where $p^G_{c,m}$ falls short of $p^H_{c,m}$.

\section{Missing-Value Ablation}\label{sec:appendix:missing_value}
  \begin{table}[t]
  \centering
  \small
  \setlength{\tabcolsep}{3pt}
  \begin{tabular}{@{}llrrrr@{}}
  \toprule
   & & \multicolumn{2}{c}{WVS} & \multicolumn{2}{c}{ANES} \\
  \cmidrule(lr){3-4} \cmidrule(lr){5-6}
   & & RCG & OSR & RCG & OSR \\
  \midrule
  gpt-5.4-nano & original & 1.272 & 0.994 & 1.370 & 1.173 \\
   & masked & 1.265 & 0.992 & 1.337 & 1.169 \\
  \addlinespace[2pt]
  gemini-flash-lite & original & 1.218 & 0.840 & 1.353 & 0.838 \\
   & masked & 1.209 & 0.840 & 1.305 & 0.833 \\
  \addlinespace[2pt]
  Qwen3.5-9B & original & 1.273 & 0.992 & 1.347 & 1.213 \\
   & masked & 1.267 & 0.992 & 1.318 & 1.210 \\
  \addlinespace[2pt]
  gemma-4-12B & original & 1.199 & 0.793 & 1.332 & 0.942 \\
   & masked & 1.191 & 0.794 & 1.287 & 0.938 \\
  \addlinespace[2pt]
  gpt-oss-20b & original & 1.269 & 1.013 & 1.346 & 1.144 \\
   & masked & 1.263 & 1.011 & 1.314 & 1.143 \\
  \addlinespace[2pt]
  Llama-3.3-70B & original & 1.275 & 0.722 & 1.277 & 0.853 \\
   & masked & 1.261 & 0.724 & 1.237 & 0.851 \\
  \midrule
  Marginals & original & 1.158 & 0.989 & 1.250 & 1.078 \\
   & masked & 1.154 & 0.991 & 1.222 & 1.075 \\
  \addlinespace[2pt]
  Copula & original & 1.041 & 0.946 & 1.091 & 0.976 \\
   & masked & 1.037 & 0.947 & 1.063 & 0.972 \\
  \addlinespace[2pt]
  LCA & original & 1.105 & 0.959 & 1.143 & 0.950 \\
   & masked & 1.101 & 0.960 & 1.111 & 0.947 \\
  \bottomrule
  \end{tabular}
  \caption{RCG and OSR in the original setting (\autoref{table:main}) and the masked setting.}
  \label{table:missing_mask}
  \end{table}

We evaluate the impact of missing values on the RCG and OSR.
In calculating RCG and OSR, the Gower distance excludes questions with missing answers;
fewer compared questions make distances more variable, so nearest-neighbor distances become smaller.
Human respondents have missing answers while generated respondents rarely do,
so human-to-human distances are computed over fewer questions than
human-to-generated distances;
this asymmetry may reduce the human-to-human distance and thereby inflate RCG.

To check this, we compute a \emph{masked} variant:
in calculating RCG and OSR, we repeat the following process for every generated respondent $\bm{v} \in \mathcal{G}_c$.
We randomly select a human respondent $\bm{u} \in \mathcal{H}_c$
outside the 100 human respondents used in the calculation, and
take the questions that $\bm{u}$ left unanswered as a \emph{mask}.
We then apply the mask to $\bm{v}$ by treating the answers
of $\bm{v}$ to these questions as missing.
In cells with exactly 100 human respondents
(11 of the 730 cells for WVS and 8 of the 603 cells for ANES), no human respondent remains
outside the evaluation sample.
For these cells, we randomly assign the 100 human respondents' missingness
masks to the 100 generated respondents in a one-to-one mapping.
\autoref{table:missing_mask} therefore covers the same cells as
\autoref{table:main}.
This procedure assesses sensitivity to the mismatch in missingness patterns between human and generated respondents.

\autoref{table:missing_mask} shows the results.
The masked RCGs are lower than the original RCGs, but 
the masked RCGs of all six LLMs remain above Marginals.
RCG decreases more on ANES (by 0.03--0.05) than on WVS (by at most 0.014), because
human respondents leave more answers missing on ANES (6.0\% vs. 1.1\%).
OSR changes by less than 0.01.
 
\section{Temperature Ablation}\label{sec:appendix:temp}
  \begin{table}[t]
  \centering
  \small
  \setlength{\tabcolsep}{4pt}
  \begin{tabular}{lrrrr}
  \toprule
   & \multicolumn{2}{c}{WVS} & \multicolumn{2}{c}{ANES} \\
  \cmidrule(lr){2-3} \cmidrule(lr){4-5}
   & RCG & OSR & RCG & OSR \\
  \midrule
  \multicolumn{5}{l}{gpt-5.4-nano} \\
  \quad $t = 0.1$ & 1.270 & 0.939 & 1.403 & 1.136 \\
  \quad $t = 0.5$ & 1.270 & 0.955 & 1.395 & 1.152 \\
  \quad $t = 1.0$ (default) & 1.270 & 0.990 & 1.390 & 1.189 \\
  \quad $t = 1.5$ & 1.271 & 1.044 & 1.394 & 1.251 \\
  \midrule
  \multicolumn{5}{l}{gemini-flash-lite} \\
  \quad default & 1.218 & 0.829 & 1.356 & 0.878 \\
  \quad $t = 0.1$ & 1.221 & 0.827 & 1.354 & 0.868 \\
  \quad $t = 1.0$ & 1.219 & 0.830 & 1.346 & 0.873 \\
  \quad $t = 1.5$ & 1.216 & 0.838 & 1.349 & 0.878 \\
  \midrule
  \multicolumn{5}{l}{Qwen3.5-9B} \\
  \quad $t = 0.1$ & 1.290 & 0.812 & 1.339 & 1.107 \\
  \quad $t = 0.5$ & 1.280 & 0.860 & 1.342 & 1.147 \\
  \quad $t = 1.0$ (default) & 1.268 & 0.981 & 1.354 & 1.234 \\
  \quad $t = 1.5$ & 1.257 & 1.087 & 1.346 & 1.302 \\
  \midrule
  \multicolumn{5}{l}{gemma-4-12B} \\
  \quad default & 1.205 & 0.783 & 1.332 & 0.968 \\
  \quad $t = 0.1$ & 1.203 & 0.776 & 1.339 & 0.963 \\
  \quad $t = 0.5$ & 1.204 & 0.778 & 1.340 & 0.962 \\
  \quad $t = 1.0$ & 1.200 & 0.786 & 1.335 & 0.970 \\
  \quad $t = 1.5$ & 1.205 & 0.801 & 1.331 & 0.983 \\
  \midrule
  \multicolumn{5}{l}{gpt-oss-20b} \\
  \quad $t = 0.1$ & 1.261 & 0.970 & 1.359 & 1.161 \\
  \quad $t = 1.0$ (default) & 1.268 & 1.003 & 1.338 & 1.163 \\
  \quad $t = 1.5$ & 1.285 & 1.076 & 1.355 & 1.196 \\
  \midrule
  \multicolumn{5}{l}{Llama-3.3-70B} \\
  \quad default & 1.279 & 0.709 & 1.274 & 0.877 \\
  \quad $t = 0.1$ & 1.277 & 0.708 & 1.277 & 0.876 \\
  \quad $t = 0.5$ & 1.276 & 0.709 & 1.277 & 0.877 \\
  \quad $t = 1.0$ & 1.278 & 0.712 & 1.277 & 0.877 \\
  \quad $t = 1.5$ & 1.276 & 0.716 & 1.276 & 0.881 \\
  \midrule
  Marginals & 1.161 & 0.991 & 1.266 & 1.108 \\
  \bottomrule
  \end{tabular}
  \caption{RCG and OSR at each temperature.
  At $t = 1.5$, 9.7\% (WVS) and 9.9\% (ANES) of the answers of Qwen3.5-9B cannot be parsed and are treated as missing; for the other open-weight models, the rate is at most 1.1\%.}
  \label{table:ablation_temperature}
  \end{table}

\autoref{table:ablation_temperature} shows RCG and OSR at each temperature. 
For RCG, the largest range is 0.033 (Qwen3.5-9B on WVS), and RCG stays above Marginals at every temperature.
For OSR, the range on each dataset is 0.035 to 0.275 for gpt-5.4-nano, Qwen3.5-9B,
and gpt-oss-20b, but at most 0.025 for the other three models; the largest movement is in Qwen3.5-9B on WVS,
from 0.812 at temperature 0.1 to 1.087 at 1.5.
For gpt-5.4-nano, Qwen3.5-9B, and gpt-oss-20b, OSR on WVS crosses one as the temperature increases.
\citet{Wang2025-zs} report a finding consistent with this pattern for free-response text:
raising the temperature does not bring the diversity of GPT-4 responses up to
the level of human participants.

\section{Reasoning Effort Ablation}\label{sec:appendix:ret_effort}
  \begin{table}[t]
  \centering
  \small
  \setlength{\tabcolsep}{4pt}
  \begin{tabular}{lrrrr}
  \toprule
   & \multicolumn{2}{c}{WVS} & \multicolumn{2}{c}{ANES} \\
  \cmidrule(lr){2-3} \cmidrule(lr){4-5}
   & RCG & OSR & RCG & OSR \\
  \midrule
  \multicolumn{5}{l}{gpt-5.4-nano} \\
  \quad \texttt{none} (default) & 1.270 & 0.990 & 1.390 & 1.189 \\
  \quad \texttt{low} & 1.225 & 0.945 & 1.280 & 1.059 \\
  \quad \texttt{medium} & 1.214 & 0.919 & 1.266 & 1.025 \\
  \quad \texttt{high} & 1.213 & 0.917 & 1.253 & 1.020 \\
  \midrule
  \multicolumn{5}{l}{gpt-oss-20b} \\
  \quad \texttt{low} (default) & 1.268 & 1.003 & 1.338 & 1.163 \\
  \quad \texttt{medium} & 1.237 & 0.932 & 1.326 & 1.096 \\
  \quad \texttt{high} & 1.246 & 0.902 & 1.338 & 1.060 \\
  \midrule
  Marginals & 1.161 & 0.991 & 1.266 & 1.108 \\
  Copula & 1.040 & 0.947 & 1.086 & 0.991 \\
  LCA & 1.111 & 0.964 & 1.149 & 0.964 \\
  \bottomrule
  \end{tabular}
  \caption{RCG and OSR at each reasoning effort level.}
  \label{table:ablation_effort}
  \end{table}

Raising the reasoning effort shifts both models toward contraction,
while RCG falls most when reasoning is turned on.
We vary the reasoning effort, which controls how much the model reasons before answering,
over \texttt{low}, \texttt{medium}, and \texttt{high} for gpt-5.4-nano (default none) and
over \texttt{medium} and \texttt{high} for gpt-oss-20b (default \texttt{low}).
\autoref{table:ablation_effort} shows the results.\footnote{Unparsable
answers are treated as missing;
they make up at most 1.6\% of the answers in each setting and dataset.
Removing from all respondents the five questions (one in WVS and four in ANES) on which they concentrate changes RCG and OSR by at most 0.008.}
Turning reasoning on for gpt-5.4-nano (\texttt{none} to \texttt{low}) lowers RCG by 0.045 on WVS
and 0.110 on ANES, whereas raising the effort beyond \texttt{low} changes RCG by
at most 0.031 in either model.
OSR drops in all four model--dataset pairs, by 0.073 to 0.169 from the default to \texttt{high}.
RCG stays above Marginals except for gpt-5.4-nano on ANES,
which equals Marginals at \texttt{medium} (1.266) and falls below it at \texttt{high} (1.253).

\section{Model Tier Ablation}\label{sec:appendix:ret_tier}
  \makeatletter\setlength{\@fptop}{0pt}\setlength{\@fpsep}{8pt plus 2fil}\setlength{\@fpbot}{0pt plus 1fil}\makeatother
  \begin{table}[t]
    \centering
    \small
    \setlength{\tabcolsep}{3pt}
    \begin{tabular}{@{}lrrrr@{}}
    \toprule
     & \multicolumn{2}{c}{WVS} & \multicolumn{2}{c}{ANES} \\
    \cmidrule(lr){2-3} \cmidrule(lr){4-5}
     & RCG & OSR & RCG & OSR \\
    \midrule
    \multicolumn{5}{@{}l}{OpenAI} \\
    \quad gpt-5.4-nano (default) & 1.270 & 0.990 & 1.390 & 1.189 \\
    \quad gpt-5.4 & 1.199 & 0.692 & 1.262 & 0.838 \\
    \midrule
    \multicolumn{5}{@{}l}{Google} \\
    \quad gemini-flash-lite (default) & 1.218 & 0.829 & 1.356 & 0.878 \\
    \quad gemini-pro & 1.238 & 0.692 & 1.250 & 0.763 \\
    \midrule
    Marginals & 1.161 & 0.991 & 1.266 & 1.108 \\
    Copula & 1.040 & 0.947 & 1.086 & 0.991 \\
    LCA & 1.111 & 0.964 & 1.149 & 0.964 \\
    \bottomrule
    \end{tabular}
    \caption{RCG and OSR of the default and higher-tier models.}
    \label{table:ablation_tier}
    \end{table}

Raising the model tier mostly lowers RCG but shifts each model toward contraction.
We use gpt-5.4 from OpenAI, a higher-tier model of the same generation as gpt-5.4-nano,
and gemini-3.1-pro-preview (gemini-pro), a frontier model from Google.
In our runs, gpt-5.4 costs approximately 12 times as much per API call as gpt-5.4-nano, and gemini-pro costs approximately 33 times as much as gemini-flash-lite.

\autoref{table:ablation_tier} indicates that raising the tier lowers RCG in three of the four model--dataset pairs, by 0.071 to 0.128, and in the
remaining one raises it by 0.020 (gemini-pro on WVS); on ANES, both higher-tier models fall below Marginals.
OSR also drops in all four pairs and crosses one only for gpt-5.4 on ANES (from 1.189 to 0.838).

\end{document}